
\documentclass{article}

\usepackage{times}
\usepackage{graphicx} 
\usepackage{subfigure} 

\usepackage{natbib}

\usepackage{algorithmic}
\usepackage{algorithm}

\usepackage{hyperref}



\usepackage[nofootnote]{icml2016}

\usepackage{amsmath,amsfonts,graphpap,amscd,mathrsfs,lscape}
\usepackage{epsfig,amssymb,amstext,xspace}
\usepackage{float}	
\usepackage{hyperref}

\usepackage{color}              

\usepackage{color-edits}
\addauthor{cd}{green}
\addauthor{vs}{red}
\addauthor{ak}{blue}

\usepackage{amsthm}

\newcommand{\E}{\mathbb{E}}

\newtheorem{theorem}{Theorem}
\newtheorem{lemma}[theorem]{Lemma}

\newtheorem{corollary}[theorem]{Corollary}

\theoremstyle{definition}
\newtheorem{defn}{Definition}

\newtheorem{example}{Example}


\def\squareforqed{\hbox{\rule{2.5mm}{2.5mm}}}

\def\QED{\ifmmode\squareforqed 
  \else{\nobreak\hfil   
    \penalty50                 
    \hskip1em                  
    \null                      
    \nobreak                   
    \hfil                      
    \squareforqed              
    \parfillskip=0pt           
    \finalhyphendemerits=0     
    \endgraf}                  
  \fi}

\def\blksquare{\rule{2mm}{2mm}}
\def\qedsymbol{\blksquare}
\newcommand{\bg}[1]{\medskip\noindent{\bf #1}}
\newcommand{\ed}{{\hfill\qedsymbol}\medskip}




\newcommand{\R}{\ensuremath{\mathbb R}}

\newcommand{\A}{\ensuremath{\mathcal{A}}}

\newcommand{\F}{\ensuremath{\mathcal H}}

\newcommand{\argmin}{\ensuremath{\mathrm{argmin}}}
\newcommand{\arginf}{\ensuremath{\mathrm{arginf}}}
\newcommand{\Ell}{\ensuremath{\mathcal{L}}}

\newcommand{\PP}{\ensuremath{\text{{\bf P}}}}

 {}


\newcommand{\junk}[1]{}




\newlength{\tmp} \newlength{\lpsx} \newlength{\lpsy} \newlength{\upsx} \newlength{\upsy}

\newcommand{\Or}[1]{\ensuremath{M\left(#1\right)}}

\newcommand{\Omit}[1]{}



\newcommand{\FTPL}{\ensuremath{{\tt FTPL}}}
\newcommand{\BTPL}{\ensuremath{{\tt BTPL}}}

\newcommand{\contalg}{\ensuremath{\textsc{Context-FTPL}}}

\newcommand{\contsemband}{\ensuremath{\textsc{Context-Semi-Bandit-FTPL}}}

\newcommand{\regret}{\ensuremath{\textsc{Regret}}}
\newcommand{\error}{\ensuremath{\textsc{Error}}}
\newcommand{\stb}{\ensuremath{\textsc{Stability}}}

\newcommand{\dotp}[2]{\ensuremath{\langle #1,#2\rangle}}

\newcommand{\contlin}{\ensuremath{\textsc{Context-FTPL}}}
\newcommand{\ftpl}{\ensuremath{\textsc{FTPL}}}

\newcommand{\pred}{Q}

\newcommand{\sectref}[1]{Section~\ref{#1}}

\newcommand{\param}{\ensuremath{y}}
\newcommand{\Param}{\ensuremath{\cal{Y}}}

\icmltitlerunning{Efficient Algorithms for Adversarial Contextual Learning}

\begin{document} 

\twocolumn[
\icmltitle{Efficient Algorithms for Adversarial Contextual Learning}

\icmlauthor{Vasilis Syrgkanis}{vasy@microsoft.com}
\icmladdress{Microsoft Research,
            641 Avenue of the Americas, New York, NY 10011 USA}
\icmlauthor{Akshay Krishnamurthy}{akshaykr@cs.cmu.edu}
\icmladdress{Microsoft Research,
            641 Avenue of the Americas, New York, NY 10011 USA}
\icmlauthor{Robert E. Schapire}{schapire@microsoft.com}
\icmladdress{Microsoft Research,
            641 Avenue of the Americas, New York, NY 10011 USA}


\vskip 0.3in
]

\begin{abstract}

We provide the first oracle efficient sublinear regret algorithms for adversarial versions of the contextual bandit problem. In this problem, the learner repeatedly makes an action on the basis of a context and receives reward for the chosen action, with the goal of achieving reward competitive with a large class of policies. We analyze two settings: i) in the transductive setting the learner knows the set of contexts a priori, ii) in the small separator setting, there exists a small set of contexts such that any two policies behave differently in one of the contexts in the set. Our algorithms fall into the follow the perturbed leader family \cite{Kalai2005} and achieve regret $O(T^{3/4}\sqrt{K\log(N)})$ in the transductive setting and $O(T^{2/3} d^{3/4} K\sqrt{\log(N)})$ in the separator setting, where $K$ is the number of actions, $N$ is the number of baseline policies, and $d$ is the size of the separator. We actually solve the more general adversarial contextual semi-bandit linear optimization problem, whilst in the full information setting we address the even more general contextual combinatorial optimization. We provide several extensions and implications of our algorithms, such as switching regret and efficient learning with predictable sequences.


\end{abstract}

\section{Introduction}\label{sec:intro}
We study contextual online learning, a powerful framework that
encompasses a wide range of sequential decision making problems.
Here, on every round, the learner receives contextual information
which can be used as an aid in selecting an action.
In the full-information version of the problem, the learner then
observes the loss that would have been suffered for each
of the possible actions, while in the much more challenging bandit
version, only the loss that was actually incurred for the chosen
action is observed.
The contextual bandit problem is of particular practical relevance,
with applications to personalized recommendations, clinical trials,
and targeted advertising.

Algorithms for contextual learning, such as Hedge~\cite{Freund1997,cesa1997use} and Exp4~\cite{Auer1995},
are well-known to have
remarkable theoretical properties, being effective even in
adversarial, non-stochastic environments, and capable of performing
almost as well as the best among an exponentially large family of
{\em policies}, or rules for choosing actions at each step.
However, the space requirements and running time of these algorithms
are generally linear in the number of policies, which is far too
expensive for a great many applications which call for an extremely
large policy space.
In this paper, we address this gap between the statistical
promise and computational challenge of algorithms for
contextual online learning in an adversarial setting.

As an approach to solving online learning problems, we
posit that the corresponding batch version is solvable.
In other words, we assume access to a certain optimization oracle 
for solving an associated batch-learning problem.
Concrete instances of such an oracle include empirical risk
minimization procedures for supervised learning, algorithms for the
shortest paths problem, and dynamic programming.

Such an oracle is central to the Follow-the-Perturbed-Leader
algorithms of~\citet{Kalai2005}, although these
algorithms are not generally efficient since they require separately
``perturbing'' each policy in the entire space.
Oracles of this kind have also been used in designing efficient contextual
bandit algorithms~\cite{Agarwal2014,langford2008epoch,dudik2011efficient};
however, these require a much more benign setting in which contexts and
losses are chosen randomly and independently rather than by an
adversary.

In this paper, for a wide range of problems, we present
computationally efficient algorithms for contextual online
learning in an adversarial setting, assuming oracle access.
We give results for both the full-information and bandit settings.
To the best of our knowledge, these results are the first of their
kind at this level of generality.

\paragraph{Overview of results.}
We begin by proposing and analyzing in \sectref{sec:oracles} a new and general
Follow-the-Perturbed-Leader algorithm in the style of~\citet{Kalai2005}.
This algorithm \emph{only} accesses the policy class using the
optimization oracle.

We then apply these results in \sectref{sec:cont-lin} to two settings.
The first is a \emph{transductive setting}~\cite{ben1997online} in
which the learner knows the set of arriving contexts a priori, or,
less stringently, knows only the set, but not necessarily the actual
sequence or multiplicity with which each context arrives.
In the second, \emph{small-separator} setting, we assume that the
policy space admits the
existence of a small set of contexts, called a \emph{separator}, such
that any two policies differ on at least one context from the set.
The size of the smallest separator for a particular policy class can
be viewed as a new measure of complexity, different from the VC
dimension, and potentially of independent interest.

We study these for a generalized online learning problem
called {\em online combinatorial optimization}, which includes as special
cases transductive contextual experts, online
shortest-path routing, online linear optimization~\cite{Kalai2005},
and online submodular minimization~\cite{Hazan2012}.

In \sectref{sec:contextual-bandits}, we extend our results to the bandit setting, or in
fact, to the more general semi-bandit setting, using a technique of~\citet{neu2013efficient}.
Among our main results, we obtain regret bounds for the adversarial
contextual bandit problem of
$O(T^{3/4}\sqrt{K\log(N)})$ in the transductive setting,
and $O(T^{2/3} d^{3/4} K\sqrt{\log(N)})$ in the small-separator
setting,
where $T$ is the number of time steps, $K$ the number of actions,
$N$ the size of the policy space, and $d$ the size of the separator.
Being sublinear in $T$, these bounds imply the learner's performance
will eventually be almost as good as the best policy, although
they are worse than the generally optimal dependence on $T$ of $O(\sqrt{T})$,
obtained by many of the algorithms mentioned above.
On the other hand, these preceding algorithms
are computationally intractable when the policy space is gigantic, while ours runs in
polynomial time, assuming access to an optimization oracle.
Improving these bounds without sacrificing computational efficiency
remains an open problem.

In \sectref{sec:switching}, we give an efficient algorithm when regret is measured
in comparison to a
competitor that is allowed to switch from one policy to another a bounded
number of times.
Here, we show that the optimization oracle can be efficiently
implemented given an oracle for the original policy class.
Specifically, this leads to a fully efficient algorithm for the online
switching shortest path problem in directed acyclic graphs. 

Finally, \sectref{sec:path-length} shows how ``path length'' regret bounds can be derived in
the style of \citet{Rakhlin2013a}.
Such bounds have various applications, for instance, in obtaining
better bounds for playing repeated games~\cite{Rakhlin2013,syrgkanis2015fast}.

\paragraph{Other related work.}
Contextual, transductive online learning using an optimization oracle was previously
studied by \citet{kakade2005batch}, whose work was later 
extended and improved by \citet{cesa2011efficient} using
a generalization of a technique from \citet{cesa1997use}.
However, these previous results are for binary classification or other convex losses
defined on one-dimensional predictions and outcomes; as such, they are
special cases of the much more general setting we consider in the
present paper.

\citet{awerbuch2008online} present an efficient algorithm for the online
shortest paths problem.
This can be viewed as solving an adversarial bandit problem with a very
particular optimization oracle over an exponentially large but highly
structured space of ``policies'' corresponding to paths in a graph.
However, their setting is clearly far more restrictive and structured
than ours is. 

\section{Online Learning with Oracles}\label{sec:oracles}

We start by analyzing the family of Follow the Perturbed Leader algorithms in a very general online learning setting. Parts of this generic formulation follow the recent formulation of \citet{Daskalakis2015}, but we present a more refined analysis which is essential for our contextual learning result in the next sections. The main theorem of this section is essentially a generalization of Theorem 1.1 of \citet{Kalai2005}.

 Consider an online learning problem where at each time-step an adversary picks an outcome $\param^t\in \Param$ and the algorithm picks a policy $\pi^t\in \Pi$ from some policy space $\Pi$.\footnote{We refer to the choice of the learner as a policy, for uniformity of notation with subsequent sections, where the learner will choose some policy that maps contexts to actions.}
The algorithm receives a loss: $\ell(\pi^t,\param^t)$, which could be positive or negative. At the end of each iteration the algorithm observes the realized outcome $\param^t$. 
We will denote with $\param^{1:t}$ a sequence of outcomes $\{\param^1,\param^2,\ldots,\param^t\}$. Moreover, we denote with: 
\begin{equation}
\Ell(\pi,\param^{1:t}) = \sum_{\tau=1}^t \ell(\pi,\param^\tau),
\end{equation} 
the cumulative utility of a fixed policy $\pi\in \Pi$ for a sequence of choices $\param^{1:t}$ of the adversary. 
The goal of the learning algorithm is to achieve loss that is competitive with the best fixed policy in hindsight.
As the algorithms we consider will be randomized, we will analyze the expected regret,
\begin{eqnarray}
\regret =  \sup_{\pi^\star\in \Pi}\E\left[\sum_{t=1}^T\ell(\pi^t,\param^t) -\sum_{t=1}^T\ell(\pi^\star,\param^t)\right],
\end{eqnarray}
which is the worst case difference between the cumulative loss of the learner and the loss of any fixed policy $\pi \in \Pi$. 

We consider adversaries that are \emph{adaptive}, which means that they can choose the outcome $\param^t$ at time $t$, using knowledge of the entire history of interaction. 
The only knowledge not available to an adaptive adversary is any randomness used by the learning algorithm at time $t$.
In contrast, an \emph{oblivious} adversary is one that picks the sequence of outcomes $\param^{1:T}$ before the start of the learning process. 

To develop computationally efficient algorithms that compete with large sets of policies $\Pi$, we assume that we are given oracle access to the following optimization problem.
\begin{defn}[Optimization oracle] 
Given outcomes $\param^{1:t}$ compute the fixed optimal policy for this sequence: 
\begin{equation}
\Or{\param^{1:t}} = \argmin_{\pi\in \Pi} \Ell(\pi,\param^{1:t}).
\end{equation}
We will also assume that the oracle performs consistent deterministic tie-breaking: i.e. whenever two policies are tied, then it always outputs the same policy.
\end{defn}

\begin{algorithm}[tb]
   \caption{Follow the perturbed leader with fake sample perturbations - $\ftpl$.}
\label{defn:ftpl}
\begin{algorithmic}
	\FOR{each time step $t$}
   	\STATE Draw a random sequence of outcomes $\{z\}=(z^1,\ldots,\ldots,z^k)$ independently, based on some time-independent distribution over sequences. Both the length of the sequence and the outcome $z^i\in \Param$ at each iteration of the sequence can be random
	\STATE Denote with $\{z\} \cup \param^{1:t-1}$ the augmented sequence where we append the extra outcome samples $\{z\}$ at the beginning of sequence $\param^{1:t-1}$
	\STATE Invoke oracle $M$ and play policy:
\begin{equation}\label{eqn:opt-t}
\pi^t = \Or{\{z\}\cup \param^{1:{t-1}}}.
\end{equation}
	\ENDFOR
\end{algorithmic}
\end{algorithm}

In this generic setting, we define a new family of Follow-The-Perturbed-Leader (FTPL) algorithms where the perturbation takes the form of extra samples of outcomes (see Algorithm~\ref{defn:ftpl}).
In each round, the learning algorithm draws a random sequence of outcomes independently, and appends this sequence to the outcomes experienced during the learning process.
The algorithm invokes the oracle on this augmented outcome sequence, and plays the resulting policy.

\paragraph{Perturbed Leader Regret Analysis.}
We give a general theorem on the regret of a perturbed leader algorithm with sample perturbations. In the sections that follow we will give instances of this analysis in specific settings. 

\begin{theorem}\label{thm:general-regret}
For a distribution over sample sequences $\{z\}$ and a sequence of adversarially and adaptively chosen outcomes $\param^{1:T}$, define:
\begin{align*}
\stb &= \sum_{t=1}^T \E_{\{z\}}\left[\ell( \pi^t, \param^t)-\ell(\pi^{t+1},\param^t)\right]
\end{align*}
\begin{multline*}
\error = \E_{\{z\}}\left[\max_{\pi\in \Pi} \sum_{z^{\tau}\in \{z\}}\ell(\pi,z^\tau)\right]\\- \E_{\{z\}}\left[\min_{\pi\in \Pi}\sum_{z^{\tau}\in \{z\}}\ell(\pi,z^\tau)\right],
\end{multline*}
where $\pi^t$ is defined in Equation \eqref{eqn:opt-t}.
Then the expected regret of Algorithm~\ref{defn:ftpl} is upper bounded by,
\begin{equation}
\regret \leq \stb + \error.
\end{equation}
\end{theorem}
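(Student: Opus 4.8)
The plan is to run the classical ``be-the-leader'' argument of \citet{Kalai2005}, treating the block of fake samples $\{z\}$ as a ``round-zero'' loss. Fix a realization of $\{z\}$ and write $L_z(\pi) := \sum_{z^\tau \in \{z\}} \ell(\pi, z^\tau)$; then the policy $\pi^t = \Or{\{z\}\cup\param^{1:t-1}}$ played by Algorithm~\ref{defn:ftpl} is exactly the leader for the augmented loss sequence $L_z,\ell(\cdot,\param^1),\dots,\ell(\cdot,\param^{t-1})$, while the hypothetical leader that also sees the current outcome is $\hat\pi^t := \Or{\{z\}\cup\param^{1:t}} = \pi^{t+1}$, with $\hat\pi^0 = \Or{\{z\}} = \pi^1$. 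The consistent deterministic tie-breaking assumed of the oracle is what makes each of these leaders a single well-defined policy, so the telescoping below is unambiguous.

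First I would establish the be-the-leader inequality by induction on the horizon: for every realization of $\{z\}$, every outcome sequence, every $s\ge 0$, and every $\pi\in\Pi$,
\[
L_z(\hat\pi^0) + \sum_{t=1}^{s} \ell(\hat\pi^t,\param^t) \;\le\; L_z(\pi) + \sum_{t=1}^{s} \ell(\pi,\param^t).
\]
The base case $s=0$ is the definition of $\hat\pi^0$; the step instantiates the horizon-$(s{-}1)$ inequality at $\pi=\hat\pi^s$, adds $\ell(\hat\pi^s,\param^s)$ to both sides, and then invokes optimality of $\hat\pi^s$ for the first $s$ loss terms. Setting $\pi=\pi^\star$, using $\hat\pi^t=\pi^{t+1}$, and rearranging gives, pointwise in $\{z\}$ and for every fixed competitor $\pi^\star$,
\[
\sum_{t=1}^T \ell(\pi^{t+1},\param^t) - \sum_{t=1}^T \ell(\pi^\star,\param^t) \;\le\; L_z(\pi^\star) - L_z(\pi^1) \;\le\; \max_{\pi\in\Pi} L_z(\pi) - \min_{\pi\in\Pi} L_z(\pi).
\]

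Next I would split the algorithm's regret against a fixed $\pi^\star$ as
\[
\sum_{t=1}^T \ell(\pi^t,\param^t) - \sum_{t=1}^T\ell(\pi^\star,\param^t) = \sum_{t=1}^T\big(\ell(\pi^t,\param^t)-\ell(\pi^{t+1},\param^t)\big) + \Big(\sum_{t=1}^T \ell(\pi^{t+1},\param^t) - \sum_{t=1}^T\ell(\pi^\star,\param^t)\Big),
\]
take expectation over $\{z\}$, bound the second group by the displayed be-the-leader inequality (its expectation is at most $\error$ since $L_z(\pi^\star)\le\max_\pi L_z(\pi)$ and $L_z(\pi^1)=\min_\pi L_z(\pi)$ hold for every realization), and recognize the first group as exactly $\stb$. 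Taking the supremum over $\pi^\star$ is harmless because $\stb+\error$ does not depend on $\pi^\star$, so this yields $\regret\le\stb+\error$.

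The one genuinely delicate point, and where I expect to spend the most care, is the interaction of the adaptive adversary with the fact that the be-the-leader inequality is derived for a single shared draw of $\{z\}$, whereas $\stb$ compares $\pi^t$ and $\pi^{t+1}$ built from the \emph{same} $\{z\}$ even though Algorithm~\ref{defn:ftpl} may redraw each round. I would reconcile this by a conditioning/coupling argument: since the adversary choosing $\param^t$ sees the past plays but, by assumption, not the randomness used at time $t$, $\param^t$ is conditionally independent of that round's perturbation, so the expectation of $\ell(\pi^t,\param^t)$ — and of the ghost term $\ell(\pi^{t+1},\param^t)$ formed from the same perturbation — is unchanged whether the perturbation is viewed as freshly drawn or as part of one global draw reused across rounds. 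Specifying the filtration precisely and checking that the telescoping survives once the $\param^t$ are random and adversary-controlled is the part that must be written carefully; the induction and rearrangement above are routine.
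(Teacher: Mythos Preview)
Your proposal is correct and mirrors the paper's proof: the same be-the-leader induction on the augmented sequence (the paper's Lemma~\ref{lem:pbtl}), the same decomposition of regret into the stability sum plus the be-the-leader deficit bounded by $\error$, and the same recognition that the adaptive-adversary step is the only subtle part. The paper handles that last step by invoking the reduction of \citet{Hutter2005} (their Lemma~12) to pass from adaptive to oblivious adversaries and then observing that, for an oblivious sequence, a single shared draw of $\{z\}$ has the same expected loss as fresh draws each round---which is precisely the coupling you sketch.
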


This theorem shows that any FTPL-variant where the perturbation can be described as a random sequence of outcomes has regret bounded by the two terms \stb~and \error.
Below we will instantiate this theorem to obtain concrete regret bounds for several problems.

The proof of the theorem is based on a well-known ``be-the-leader" argument. 
We first show that if we included the $t$th loss vector in the oracle call at round $t$, we would have regret bounded by \error, and then we show that the difference between our algorithm and this foreseeing one is bounded by \stb. 
See Appendix~\ref{app:oracles} for the proof.


\section{Adversarial Contextual Learning}\label{sec:cont-lin}

Our first specialization of the general setting is to \emph{contextual online combinatorial optimization}. 
In this learning setting, at each iteration, the learning algorithm picks a binary action vector $a^t \in \A\subseteq \{0,1\}^K$, from some feasibility set $\A$. We will interchangeably use $a^t$ both as a vector and as the set $\{j\in [K]:a^t(j)=1\}$.  The adversary picks a outcome $\param^t=(x^t,f^t)$ where $x^t$ belongs to some context space $\mathcal{X}$ and $f^t: \A \rightarrow \R$ is a cost function that maps each feasible action vector $a\in \A$ to a cost $f^t(a)$. The goal of the learning algorithm is to achieve low regret relative to a set of policies $\Pi \subset (\mathcal{X} \rightarrow \A)$ that map contexts to feasible action vectors. At each iteration the algorithm picks a policy $\pi^t$ and incurs a cost $\ell(\pi^t,\param^t) = f^t(\pi^t(x^t))$.
In this section, we consider the full-information problem, where after each round, the entire loss function $f^t$ is revealed to the learner. 
Online versions of a number of important learning tasks, including cost-sensitive classification, multi-label prediction, online linear optimization~\cite{Kalai2005} and online submodular minimization~\cite{Hazan2012} are all special cases of the contextual online combinatorial optimization problem, as we will see below.  

\paragraph{Contextual Follow the Perturbed Leader.} We will analyze the performance of an instantiation of the $\ftpl$ algorithm in this setting. 
To specialize the algorithm, we need only specify the distribution from which the sequence of fake outcomes $\{z\}$ is drawn at each time-step.
This distribution is parameterized by a subset of contexts $X\subseteq {\cal X}$, with $|X|=d$  and a noise parameter $\epsilon$.
We draw the sequence $\{z\}$ as follows: for each context $x \in X$, we add the fake sample $z_x=(x,f_x)$ where $f_x$ is a linear loss function based on a loss vector $\ell_x \in \R^K$, meaning that $f_x(a) = \dotp{a}{\ell_x}$. 
Each coordinate of the loss vector $\ell_x$ is drawn from a independent Laplace distribution with parameter $\epsilon$, i.e. for each coordinate $j\in [K]$ the density of $\ell_x(j)$ at $q$ is $f(q) = \frac{\epsilon}{2}\exp\{-\epsilon |q|\}$. The latter distribution has mean $0$ and variance $\frac{2}{\epsilon^2}$.
Using this distribution for fake samples gives an instantiation of Algorithm~\ref{defn:ftpl}, which we refer to as $\contlin(X,\epsilon)$ (see Algorithm~\ref{alg:cont-experts}).

\begin{algorithm}[tb]
  \caption{Contextual Follow the Perturbed Leader Algorithm - $\contlin(X,\epsilon)$.}
\label{alg:cont-experts}
\begin{algorithmic}
	\STATE {\bfseries Input:} parameter $\epsilon$, set of contexts $X$, policies $\Pi$.
	\FOR{each time step $t$}
   	\STATE Draw a sequence $\{z\} = (z_1,\ldots,z_d)$ of $d$ fake samples. 
	\STATE The context associated with sample $z_x$ is equal to $x$ and each coordinate of the loss vector $\ell_x$ is drawn i.i.d. from a Laplace$(\epsilon)$
	\STATE Pick and play according to policy 
\begin{equation}\label{eqn:opt-policy}
\pi^t = M(\{z\}\cup\param^{1:t-1})
\end{equation}
	\ENDFOR
\end{algorithmic}
\end{algorithm}

We analyze $\contlin(X, \epsilon)$ in two settings: the \emph{transductive setting} and the \emph{small separator} setting.

\begin{defn}
In the \emph{transductive setting}, at the beginning of the learning process, the adversary reveals to the learner the set of contexts that will arrive, although the ordering and multiplicity need not be revealed.
\end{defn}

\begin{defn}
In the \emph{small separator} setting, there exists a set $X \subset {\cal X}$ such that for any two distinct policies $\pi,\pi' \in \Pi$, there exists $x \in X$ such that $\pi(x) \ne \pi'(x)$. 
\end{defn}
In the transductive setting, the set $X$ that we use in $\contlin(X, \epsilon)$ is precisely this set of contexts that will arrive, which by assumption is available to the learning algorithm. 
In this small separator setting, the set $X$ used by $\contlin$ is the separating set.
This enables non-transductive learning, but one must be able to compute a small separator prior to learning.
Below we will see examples where this is possible. 


We now turn to bounding the regret of $\contlin(X,\epsilon)$.
Let $d=|X|$ be the number of contexts that are used in the definition of the noise distribution,  let $N = |\Pi| \le d^K$, and let $m$ denote the maximum number of non-zero coordinates that any policy can choose on any context, i.e. $m = \max_{a\in \A} \| a\|_1$. 
Even though at times we might constrain the sequence of loss functions that the adversary can pick (e.g. linear non-negative losses), we will assume that \emph{the oracle $M$ can handle at least linear loss functions with both positive and negative coordinates}.
Our main result is:
\begin{theorem}[Complete Information Regret]\label{thm:main-complete}
$\contlin(X,\epsilon)$ achieves regret against any adaptively and adversarially chosen sequence of contexts and loss functions:
\begin{enumerate}
\item\label{thm:it:trans} In the \emph{transductive} setting:
\begin{align*}
\regret ~\leq~& 
4 \epsilon  K\cdot \sum_{t=1}^T\E\left[\|f^t\|_*^2\right]+\frac{10}{\epsilon}\sqrt{dm}\log(N)
\end{align*}
\item In the \emph{transductive} setting, when loss functions are \emph{linear and non-negative}, i.e. $f^t(a) = \dotp{a}{\ell^t}$ with $\ell^t \in \R_{\geq 0}^K$:
\begin{align*}
\regret ~\leq~ 
\epsilon \cdot \sum_{t=1}^T\E\left[\dotp{\pi^t(x^t)}{\ell^t}^2\right]+\frac{10}{\epsilon}\sqrt{dm}\log(N)
\end{align*}
\item In the \emph{small separator} setting:
\begin{align*}
\regret ~\leq~ 
4\epsilon K d \cdot \sum_{t=1}^T \E\left[\|f^t\|_{*}^2\right]+\frac{10}{\epsilon}\sqrt{dm}\log(N)
\end{align*}
\end{enumerate}
where $\|f^t\|_* = \max_{a\in \A} |f^t(a)|$. 

When $\epsilon$ is set optimally, loss functions are in $[0,1]$, and loss vectors are in $[0,1]^K$, these give regret:\footnote{Observe that when loss vectors are in $[0,1]^K$, then the linear loss function is actually in $[0,m]$ not in $[0,1]$.} $O\left((dm)^{1/4}\sqrt{KT\log(N)}\right)$ in the first setting, $O\left(d^{1/4}m^{5/4} \sqrt{T\log(N)}\right)$ in the second and $O\left(m^{1/4}d^{3/4}\sqrt{KT\log(N)}\right)$ in the third. 
\end{theorem}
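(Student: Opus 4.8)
The plan is to instantiate Theorem~\ref{thm:general-regret} with the Laplace-over-separator perturbation described for $\contlin(X,\epsilon)$, so the work reduces to bounding the two quantities \stb~and \error~separately, and then optimizing over $\epsilon$.

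First I would handle \error. Since the fake sample set $\{z\}$ consists of $d$ contexts each carrying a linear loss vector with i.i.d.\ Laplace$(\epsilon)$ coordinates, for any fixed policy $\pi$ the quantity $\sum_{z_x\in\{z\}}\ell(\pi,z_x) = \sum_{x\in X}\dotp{\pi(x)}{\ell_x}$ is a sum of at most $dm$ independent Laplace variables (only the coordinates picked by $\pi(x)$ contribute, and $\|\pi(x)\|_1\le m$). Each such sum is mean-zero with variance $O(dm/\epsilon^2)$, and more importantly is sub-exponential, so a standard maximal-inequality / union-bound argument over the $N$ policies gives $\E[\max_\pi\sum_x\dotp{\pi(x)}{\ell_x}] = O\!\big(\tfrac{1}{\epsilon}\sqrt{dm}\log N\big)$ (the $\log N$ rather than $\sqrt{\log N}$ coming from the heavy Laplace tail), and symmetrically for the min. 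This yields $\error \le \tfrac{10}{\epsilon}\sqrt{dm}\log(N)$, which is exactly the second term in all three bounds; this part is essentially the classical FTPL perturbation-diameter estimate and should go through cleanly.

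The main obstacle is the \stb~term, i.e.\ bounding $\sum_t \E_{\{z\}}[\ell(\pi^t,\param^t)-\ell(\pi^{t+1},\param^t)]$, because $\pi^t = M(\{z\}\cup\param^{1:t-1})$ and $\pi^{t+1}=M(\{z\}\cup\param^{1:t})$ differ only by the insertion of the single outcome $\param^t$. The standard move is a change-of-measure / coupling argument in the space of perturbations: adding $\param^t=(x^t,f^t)$ to the oracle input is, for the purposes of the minimizer, equivalent to shifting the fake loss vector $\ell_{x^t}$ at context $x^t$ by $f^t$ (in the linear case) or more generally perturbing only the $x^t$-coordinate of the perturbation. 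Because the Laplace density has the property $f(q)/f(q-\Delta)\le \exp(\epsilon\,\|\Delta\|_1)$, the distribution of $\pi^t$ and the distribution of $\pi^{t+1}$ are within a multiplicative factor $e^{O(\epsilon\,(\text{range of }f^t\text{ on the relevant coordinates}))}$ of each other, so $\E[\ell(\pi^t,\param^t)-\ell(\pi^{t+1},\param^t)] \le (e^{O(\epsilon\,\|f^t\|_*)}-1)\cdot O(\|f^t\|_*) = O(\epsilon\,\|f^t\|_*^2)$ per step when the exponent is small, up to a factor counting how many Laplace coordinates move ($K$ in the general combinatorial case, hence the $4\epsilon K$; only the single coordinate picked by $\pi^t(x^t)$ in the linear non-negative case, hence the sharper $\epsilon\,\dotp{\pi^t(x^t)}{\ell^t}^2$). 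The subtlety is that the adversary is adaptive, so one must be careful that $\param^t$ is fixed before $\{z\}$ is redrawn inside the expectation and that $\param^{1:t-1}$ (which $\pi^t$ also depends on) is handled by conditioning; I would carry out the stability bound conditionally on the history and then sum.

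The small-separator case needs one extra observation: there the same context $x^t$ may not lie in $X$, so inserting $\param^t$ does not directly correspond to shifting one of the fake Laplace vectors $\ell_x$, $x\in X$. To recover a stability bound I would use that $\Pi$ is separated by $X$ — so the map $\pi\mapsto (\pi(x))_{x\in X}$ is injective and the oracle's behavior is governed entirely by the aggregate loss restricted to the $d$ separator contexts — and reduce to the transductive analysis on the enlarged instance, paying an extra factor of $d$ (each real round can be ``charged'' to the $d$ separator coordinates), which is the source of the $4\epsilon K d$ in part~3. Finally, I would plug in $\|f^t\|_*\le 1$ (or $\le m$ when loss vectors, not loss functions, are in $[0,1]^K$), replace the sum over $t$ by $T$ (or $mT$), giving $\regret \le c_1\epsilon K T + c_2\tfrac{1}{\epsilon}\sqrt{dm}\log N$ and its analogues, and choose $\epsilon = \Theta\!\big(\sqrt{\tfrac{\sqrt{dm}\log N}{KT}}\big)$ to balance the two terms, yielding the stated $O((dm)^{1/4}\sqrt{KT\log N})$ and the other two rates. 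The routine parts are the two balancing computations; the delicate parts are the change-of-measure stability estimate under an adaptive adversary and the separator-to-transductive reduction.
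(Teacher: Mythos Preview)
Your overall architecture—instantiate Theorem~\ref{thm:general-regret}, bound \error\ by a Laplace maximal inequality, bound \stb\ by a perturbation-sensitivity argument, then balance $\epsilon$—is exactly the paper's, and your treatment of \error\ and of part~2 (the density-ratio / change-of-measure bound $p^t(\pi)\le e^{\epsilon\dotp{\pi(x^t)}{\ell^t}}p^{t+1}(\pi)$) matches the paper essentially verbatim.

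There is, however, a real gap in your plan for part~1. The density-ratio step $f(q)/f(q-\Delta)\le e^{\epsilon\|\Delta\|_1}$ requires that inserting $(x^t,f^t)$ into the oracle is equivalent to a \emph{translation} of the Laplace vector $\ell_{x^t}$, i.e.\ that $f^t(\pi(x^t)) = \dotp{\pi(x^t)}{\Delta}$ for some fixed $\Delta$ independent of $\pi$. That holds only when $f^t$ is linear; for a general $f^t:\A\to\R$ there is no such $\Delta$, so the change-of-measure argument as you wrote it does not go through. The paper instead uses a coordinate-wise interval argument: it bounds $\Pr[\pi^t(x^t)\ne\pi^{t+1}(x^t)]$ by a union bound over $j\in[K]$, conditions on all noise except the single scalar $\ell_{x^t}(j)$, and shows the event $\{j\in\pi^t(x^t),\,j\notin\pi^{t+1}(x^t)\}$ forces $\ell_{x^t}(j)$ into an interval of length $2\|f^t\|_*$, giving probability $\le\epsilon\|f^t\|_*$ regardless of the form of $f^t$. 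Summing over the $K$ coordinates and multiplying by the $2\|f^t\|_*$ loss swing yields $4\epsilon K\|f^t\|_*^2$. Your parenthetical ``$K$ coordinates move'' suggests you sensed this, but the mechanism you actually described is the wrong one for non-linear losses.

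For part~3 your intuition (injectivity of $\pi\mapsto(\pi(x))_{x\in X}$, extra factor $d$) is correct, but the sentence ``the oracle's behavior is governed entirely by the aggregate loss restricted to the $d$ separator contexts'' is false and not how the argument works—the oracle still sees all real losses on the real contexts $x^\tau$, which need not lie in $X$. The separator is used only \emph{after} the fact: since $X$ separates $\Pi$, the event $\{\pi^t\ne\pi^{t+1}\}$ equals $\bigcup_{x\in X}\{\pi^t(x)\ne\pi^{t+1}(x)\}$, so one union-bounds over the $d$ separator contexts and then, for each fixed $x\in X$, runs the same coordinate-interval argument as in part~1 using the noise $\ell_x(j)$ at that separator context. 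That is where the $d$ comes from, not from any ``enlarged transductive instance.''
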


To prove the theorem we separately upper bound the $\stb$ and the $\error$ terms and then Theorem \ref{thm:main-complete} follows from Theorem \ref{thm:general-regret}.
One key step is a refined $\error$ analysis that leverages the symmetry of the Laplace distribution to obtain a bound with dependence $\sqrt{d}$ rather than $d$. 
This is possible only if the perturbation is centered about zero, and therefore does not apply to other FPTL variants that use non-negative distributions such as exponential or uniform~\cite{Kalai2005}.
Due to lack of space we defer proof details to Appendix~\ref{app:complete-info}.

This general theorem has implications for many specific settings that have been extensively studied in the literature.
We turn now to some examples.

\begin{example}(Transductive Contextual Experts)
The contextual experts problem is the online version of cost-sensitive multiclass classification, and the full-information version of the widely-studied contextual bandit problem.
The setting is as above, but $\mathcal{A}$ corresponds to sets with cardinality $1$, meaning that $m=1$ in our formulation. 
As a result, $\contlin$ can be applied as is, and the second claim in Theorem~\ref{thm:main-complete} shows that the algorithm has regret at most $O\left(d^{1/4}\sqrt{T\log(N)}\right)$ if at most $d$ contexts arrive. 
In the worst case this bound is $O(T^{3/4}\sqrt{\log(N)})$, since the adversary can choose at most $T$ contexts. 
To our knowledge, this is the first fully oracle-efficient algorithm for online adversarial cost-sensitive multiclass classification, albeit in the transductive setting.

This result can easily be lifted to infinite policy classes that have small Natarajan Dimension (a multi-class analog of VC-dimension), since such classes behave like finite ones once the set of contexts is fixed. 
Thus, in the transductive setting, Theorem~\ref{thm:main-complete} can be applied along with the analog of the Sauer-Shelah lemma, leading to a sublinear regret bound for classes with finite Natarajan dimension. 
On the other hand, in the non-transductive case it is possible to construct examples where achieving sublinear regret against a VC class is information-theoretically hard, demonstrating a significant difference between the two settings. 
See Corollary~\ref{cor:natarajan} and Theorem~\ref{thm:hardness} in the Appendix~\ref{sec:infinite} for details on these arguments. \ed
\end{example}

\begin{example}(Non-contextual Shortest Path Routing and Linear Optimization)
For the case when the linear optimization corresponds to computing the shortest $(s,t)$-path in a DAG, then $K$ and $m$ equal to the number of edges and the problem can be solved in poly-time even when edge costs are negative.
More generally, $\contlin$ can also be applied to non-contextual problems, which is a special case where $d=1$. 
In such a case, $\contlin$ reduces to the classical FTPL algorithm with Laplace instead of Exponential noise, and Theorem~\ref{thm:main-complete} matches existing results for online linear optimization~\cite{Kalai2005}.
In particular, for problems without context, $\contlin$ has regret that scales with $\sqrt{T}$. \ed
\end{example}

\begin{example}(Online sub-modular minimization)
A special case of our setting is the online-submodular minimization problem studied in previous work~\cite{Hazan2012,jegelka2011online}.
As above, this is a non-contextual online combinatorial optimization problem, where the loss function $f^t$ presented at each round is submodular.
Here, $\contlin$ reduces to the strongly polynomial algorithm of~\citet{Hazan2012}, although our noise follows a Laplace instead of Uniform distribution.
A straightforward application of the first claim of Theorem~\ref{thm:main-complete} shows that $\contlin$ achieves regret at most $O(KH\sqrt{T\log(K)})$ if the losses are bounded in $[-H,H]$, and a slightly refined analysis of the error terms gives $O(KH\sqrt{T})$ regret. 
This matches the FTPL analysis of~\citet{Hazan2012}, although they also develop an algorithm based on online convex optimization that achieves $O(H\sqrt{KT})$ regret. \ed
\end{example}

\begin{example}(Contextual Experts with linear policy classes)
The third clause of Theorem~\ref{thm:main-complete} gives strong guarantees for the non-transductive contextual experts problem, provided one can construct a small separating set of contexts.
Often this is possible, and we provide some examples here.
\begin{enumerate}
\item For binary classification where the policies are boolean disjunctions (conjunctions) over $n$ binary variables, the set of $1$-sparse ($n-1$-sparse) boolean vectors form a separator of size $n$. 
This is easy to see as two disjunctions must disagree on at least one variable, so they will make different predictions on the vector that is non-zero only in that component.
Note that the size of the small separator is independent of the time horizon $T$ and logarithmic in the number of policies.
Thus, Theorem~\ref{thm:main-complete} shows that $\contlin$ suffers at most $O(\sqrt{T}\log(N))$ regret since $d=\log(N), m=1$ and $K=2$. 
\item For binary classification in $n$ dimensions, consider a discretization of linear classifiers defined as follows, the separating hyperplane of each classifier is defined by choosing the intercept with each axis from one of $O(1/\tau)$ values (possibly including something denoting no intercept). 
Then a small separator includes, for each axis, one point between each pair in the discretization, for a total of $O(n/\tau)$ points. 
This follows since any two distinct classifiers have different intercepts for at least one axis, and our small separator has one point between these two different intercepts, leading to different predictions.
Note that the number of classifiers in the discretization is $O(\tau^{-n})$.
Here Theorem~\ref{thm:main-complete} shows that $\contlin$ suffers at most $O(\frac{n\sqrt{T}}{\tau^{3/4}}(\log(\frac{1}{\tau}))^{1/4})$ regret since $N = O(\tau^{-n}), d=\frac{n}{\tau}, m=1$ and $K=2$. 
This bound has a undesireable polynomial dependence on the discretization resolution $\tau$ but avoids exponential dimension dependence. 
\end{enumerate}
Thus we believe that the smallest separator size for a policy class can be viewed as a new complexity measure, which may be of independent interest. \ed
\end{example}

\section{Linear Losses and Semi-Bandit Feedback}
\label{sec:contextual-bandits}

In this section, we consider contextual learning with semi-bandit feedback and linear non-negative losses.
At each round $t$ of this learning problem, the adversary chooses a non-negative vector $\ell^t \in \mathbb{R}^K_{\geq 0}$ and sets the loss function to $f^t(a) = \dotp{a}{\ell^t}$.
The learner chooses an action $a^t \in \mathcal{A} \subset \{0,1\}^K$ accumulates loss $f^t(a^t)$ and observes $\ell^t(j)$ for each $j \in a^t$.
In other words, the learner observes the coefficients for only the elements in the set that he picked. 
Notice that if $\mathcal{A}$ is the one-sparse vectors, then this setting is equivalent to the well-studied contextual bandit problem~\cite{langford2008epoch}.

\paragraph{Semi-bandit algorithm.}
Our semi-bandit algorithm proceeds as follows: At each iteration it makes a call to $\contlin(\epsilon)$, which returns a policy $\pi^t$ and implies a chosen action $a^t=\pi^t(x^t)$. 
The algorithm plays the action $a^t$, observes the coordinates of the loss $\{\ell^t(j)\}_{j \in a^t}$ and proceeds to construct an \emph{proxy loss vector} $\hat{\ell}^t$, which it passes to the instance of $\contlin$, before proceeding to the next round.

To describe the construction of $\hat{\ell}^t$, let $p^t(\pi)=\Pr[\pi^t=\pi|\mathcal{H}^{t-1}]$ denote the probability that $\contlin$ returns policy $\pi$ at time-step $t$ conditioned on the past history (observed losses and contexts, chosen actions, current iteration's context, internal randomness etc., which we denote with $\mathcal{H}^{t-1}$).
For any element $j\in [K]$, let:
\begin{equation}
q^t(j) = \sum_{\pi\in \Pi: j\in \pi(x^t)} p^t(\pi)
\end{equation}
denote the probability that element $j$ is included in the action chosen by $\contlin(X,\epsilon)$ at time-step $t$.

Typical semi-bandit algorithms aim to construct proxy loss vectors by dividing the observed coordinates of the loss by the probabilities $q^t(j)$ and setting other coordinates to zero, which is the well-known inverse propensity scoring mechanism~\cite{HorvitzTh52}. 
Unfortunately, in our case, the probabilities $q^t(j)$ stem from randomness fed into the oracle, so that they are implicit maintained and therefore must be approximated.

We therefore construct $\hat{\ell}^t$ through a geometric sampling scheme due to~\citet{neu2013efficient}.
For each $j\in \pi^t(x^t)$, we repeatedly invoke the current execution of the $\contlin$ algorithm with fresh noise, until it returns a policy that includes $j$ in its action for context $x^t$.
The process is repeated at most $L$ times for each $j\in \pi^t(x^t)$ and the number of invocations is denoted $J^t(j)$.
The vector $\hat{\ell}^t$ that is returned to the full feedback algorithm is zero for all $j \notin \pi^t(x^t)$, and for each $j \in \pi^t(x^t)$ it is $\hat{\ell}^t(j)=J^t(j)\cdot \ell^t(j)$.

By Lemma 1 of~\citet{neu2013efficient}, this process yields a proxy loss vector $\hat{\ell}^t$ that satisfies,
\begin{equation}\label{eqn:bias}
\E\left[\hat{\ell}^t(j)~|~\F^{t-1}\right]= \left(1-\left(1-q^t(j)\right)^L\right)\ell^t(j).
\end{equation} 
The semi-bandit algorithm feeds this proxy loss vector to the $\contlin$ instance and proceeds to the next round.


\begin{algorithm}[tb]
\caption{Contextual Semi-Bandit Algorithm - $\contsemband(X,\epsilon,L)$.}
\label{alg:cont-semi-bandits}
\begin{algorithmic}
	\STATE {\bfseries Input:} parameter $\epsilon,M$, set of contexts $X$, policies $\Pi$.
	\STATE Let $D$ denote a distribution over a sequence of $d$ samples, $\{z\} = (z_1,\ldots,z_d)$, where the context associated with sample $z_x$ is equal to $x$ and each coordinate of the loss vector $\ell_x$ is drawn i.i.d. from a Laplace$(\epsilon)$\\
\FOR{each time-step $t$}
\STATE Draw a sequence $\{z\}^t$ from distribution $D$.
\STATE Pick and play according to policy 
\begin{equation}\label{eqn:opt-policy2}
\pi^t = M(\{z\}\cup(x^{1:t-1},\hat{\ell}^{1:t-1}))
\end{equation}
\STATE Observe loss $\ell^t(j)$ for each $j\in \pi^t(x^t)$
\STATE Set $\hat{\ell}^t(j)=0$ for any $j\notin \pi^t(x^t)$
\STATE Set $\hat{\ell}^t(j) = J^t(j)\cdot\ell^t(j)$, for each $j\in \pi^t(x^t)$, where $J^t(j)$ is computed by the following geometric sampling process:
\FOR{each element $j\in \pi^t(x^t)$}
	\FOR{each iteration $i=1,\ldots,L$}
		\STATE Draw a sequence $\{y\}^i$ from distribution $D$.
		\STATE Compute $\pi^i= M(\{y\}^i\cup(x^{1:t-1},\hat{\ell}^{1:t-1}))$
		\STATE If $j\in\pi^i(x^t)$ then stop and return $J^t(j)=i$
	\ENDFOR
\ENDFOR
\STATE If process finished without setting $J^t(j)$, then set $J^t(j)=L$
\ENDFOR
\end{algorithmic}
\end{algorithm}


The formal description of the complete bandit algorithm is given in Algorithm \ref{alg:cont-semi-bandits} and we refer to it as $\contsemband(X,\epsilon,L)$. We bound its regret in the transductive and small separator setting.

\begin{theorem}\label{thm:semi-bandit-regret}
The expected regret of $\contsemband(X,\epsilon,L)$ in the semi-bandit setting against any adaptively and adversarially chosen sequence of contexts and linear non-negative losses, with $\|\ell^t\|_*\leq 1$, is at most:
\begin{itemize}
\item In the transductive setting:
\begin{multline*}
\regret \leq 
2\epsilon m K T+ \frac{10}{\epsilon}\sqrt{dm}\log(N)+\frac{K T}{eL}
\end{multline*}

\item In the small separator setting: 
\begin{multline*}
\regret \leq 
8\epsilon K^2 d L m T+ \frac{10}{\epsilon}\sqrt{dm}\log(N)+\frac{KT}{eL}
\end{multline*}
\end{itemize}
For $L=\sqrt{KT}$ and optimal $\epsilon$, the regret is $O\left(d^{1/4}m^{3/4}\sqrt{KT\log(N)}\right)$ in the first setting. 
For $L=T^{1/3}$ and optimal $\epsilon$, the regret is $O\left((md)^{3/4} K T^{2/3}\sqrt{\log(N)}\right)$ in the second setting.
Moreover, each iteration of the algorithm requires $mL$ oracle calls and otherwise runs in polynomial time in $d, K$.
\end{theorem}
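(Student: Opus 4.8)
The plan is to reduce the semi‑bandit regret to the \emph{full information} regret of the internal $\contlin(X,\epsilon)$ instance, run on the estimated losses $\hat\ell^{1:T}$, plus a small additive term that accounts for the truncation at $L$ in the geometric sampling scheme; I would then invoke Theorem~\ref{thm:general-regret} together with the appropriate clause of Theorem~\ref{thm:main-complete}.

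First I would carry out a regret decomposition in the style of~\citet{neu2013efficient}. Fix a comparator $\pi^\star$, write $q^t(j)$ for the probability coordinate $j$ is played at round $t$, and let $\bar\ell^t(j)=\bigl(1-(1-q^t(j))^L\bigr)\ell^t(j)$ be the conditional mean of $\hat\ell^t(j)$ supplied by~\eqref{eqn:bias}. Since $\Ex{\dotp{\pi^t(x^t)}{\ell^t}\mid\F^{t-1}}=\dotp{q^t}{\ell^t}$, I telescope $\dotp{q^t}{\ell^t}-\dotp{\pi^\star(x^t)}{\ell^t}$ through $\dotp{q^t}{\hat\ell^t}$, $\dotp{\pi^t(x^t)}{\hat\ell^t}$ and $\dotp{\pi^\star(x^t)}{\hat\ell^t}$. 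Two of the cross terms are non‑positive in conditional expectation: the comparator‑bias term equals $-\sum_{j\in\pi^\star(x^t)}(1-q^t(j))^L\ell^t(j)\le0$, and, using that $\hat\ell^t$ is supported on $\pi^t(x^t)$ so that $\Ex{\dotp{\pi^t(x^t)}{\hat\ell^t}\mid\F^{t-1}}=\sum_j\bar\ell^t(j)\ge\dotp{q^t}{\bar\ell^t}$, the term $\dotp{q^t}{\hat\ell^t}-\dotp{\pi^t(x^t)}{\hat\ell^t}$ is also $\le0$ in expectation. The term $\dotp{q^t}{\ell^t-\hat\ell^t}$ has conditional mean $\sum_j q^t(j)(1-q^t(j))^L\ell^t(j)$, which is at most $K/(eL)$ since $q(1-q)^L\le qe^{-qL}\le1/(eL)$ and $\ell^t(j)\le1$. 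What remains, $\sum_t\bigl(\dotp{\pi^t(x^t)}{\hat\ell^t}-\dotp{\pi^\star(x^t)}{\hat\ell^t}\bigr)$, is precisely the full information regret of $\contlin$ on the sequence $\hat\ell^{1:T}$, so by Theorem~\ref{thm:general-regret} it is at most $\stb+\error$ with both quantities evaluated on the proxy sequence. Hence $\regret\le \stb+\error+KT/(eL)$.

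The second step bounds $\stb$ and $\error$ for the proxy sequence. The $\error$ term depends only on the fake‑sample distribution (Laplace noise on the $d$ contexts of $X$), not on the played losses, so it is unchanged: $\error\le\frac{10}{\epsilon}\sqrt{dm}\log(N)$. For $\stb$ I would re‑run the stability argument underlying Theorem~\ref{thm:main-complete} on the proxy losses, the only subtlety being that $\hat\ell^t$ is correlated with $\pi^t$ and, through $\pi^{t+1}=\Or{\{z\}\cup(x^{1:t},\hat\ell^{1:t})}$ in~\eqref{eqn:opt-policy2}, with the next leader. The clean decoupling is to compare $\pi^{t+1}$ to the ``expected proxy'' leader $\Or{\{z\}\cup(x^{1:t-1},\hat\ell^{1:t-1},\bar\ell^t)}$, which is independent of the round‑$t$ resampling and carries a loss vector with coordinates in $[0,1]$; the leftover discrepancy is exactly the quantity already charged to $KT/(eL)$. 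In the transductive setting $x^t$ always lies in the perturbed set, so the argument localizes to context $x^t$ and the per‑round stability is bounded by $2\epsilon\,m\cdot\Ex{\dotp{\pi^t(x^t)}{\hat\ell^t}}$, one factor $m$ coming from $\|\pi^t(x^t)\|_1\le m$ and $\Ex{\dotp{\pi^t(x^t)}{\hat\ell^t}}=\sum_j\bar\ell^t(j)\le K$, giving $\stb\le2\epsilon mKT$. In the small‑separator setting $x^t$ need not lie in $X$, so I instead use the third clause of Theorem~\ref{thm:main-complete}, whose stability bound involves $\Ex{\|\hat f^t\|_*^2}$; the pointwise bound $\|\hat f^t\|_*\le\dotp{\pi^t(x^t)}{\hat\ell^t}\le mL$ together with pulling out one such factor gives $\Ex{\|\hat f^t\|_*^2}\le mL\cdot\Ex{\dotp{\pi^t(x^t)}{\hat\ell^t}}=O(KmL)$, hence $\stb\le8\epsilon K^2 dmLT$. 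Plugging these into $\regret\le\stb+\error+KT/(eL)$, optimizing over $\epsilon$, and then choosing $L=\sqrt{KT}$ (resp.\ $L=T^{1/3}$) so that $KT/(eL)$ is of lower order, yields the claimed $O(d^{1/4}m^{3/4}\sqrt{KT\log N})$ (resp.\ $O((md)^{3/4}KT^{2/3}\sqrt{\log N})$) bounds. The per‑iteration running time is dominated by the resampling: at most $L$ oracle calls for each of the $\le m$ coordinates of $\pi^t(x^t)$, plus one call for $\pi^t$ itself, i.e.\ $O(mL)$ oracle calls, with $\poly(d,K)$ overhead for the Laplace draws and bookkeeping.

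The main obstacle is the stability bound in the second step: isolating the stability contribution of the \emph{random} proxy loss $\hat\ell^t$ without paying the full $\|\hat\ell^t\|_\infty\le L$ in the transductive regime. This needs the decoupling to the expected‑proxy leader so that the per‑round change ``sees'' the bounded vector $\bar\ell^t$ (coordinates $\le1$) rather than $\hat\ell^t$, combined with the sparsity observation $\|\pi^t(x^t)\|_1\le m$, and it needs a careful check that the residual gap between $\Or{\cdot,\hat\ell^t}$ and $\Or{\cdot,\bar\ell^t}$ contributes only the already‑accounted $KT/(eL)$ term rather than an $\epsilon$‑dependent one. Everything else is routine bookkeeping together with the moment estimate $\Ex{J^t(j)\mid\F^{t-1}}=(1-(1-q^t(j))^L)/q^t(j)\le 1/q^t(j)$ for the truncated geometric variables.
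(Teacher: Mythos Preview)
Your regret decomposition in the first step and your small-separator argument are essentially the paper's. The gap is in the transductive stability step.

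You claim the per-round stability is $2\epsilon m\cdot\Ex{\dotp{\pi^t(x^t)}{\hat\ell^t}}$ via a ``decoupling'' to the expected-proxy leader built from $\bar\ell^t$. This is not what any of the stability lemmas behind Theorem~\ref{thm:main-complete} give you, and the decoupling itself does not close. Concretely, writing $\tilde\pi^{t+1}$ for the leader that uses $\bar\ell^t$ in place of $\hat\ell^t$, your residual term is $\dotp{\tilde\pi^{t+1}(x^t)-\pi^{t+1}(x^t)}{\hat\ell^t}$: this compares two leaders whose inputs differ by $\hat\ell^t-\bar\ell^t$ (coordinates of order up to $L$), and the difference is evaluated on $\hat\ell^t$ itself. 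That is a genuine stability term, not the bias term $\sum_j q^t(j)(1-q^t(j))^L\ell^t(j)$ you already charged to $KT/(eL)$; there is no reason it collapses to something $\epsilon$-free. Even the ``main'' piece $\dotp{\pi^t(x^t)-\tilde\pi^{t+1}(x^t)}{\hat\ell^t}$ is problematic: you still have $\hat\ell^t$ (not $\bar\ell^t$) in the inner product, and $\hat\ell^t$ is correlated with $\pi^t$ through its support.

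The paper avoids all of this by applying the \emph{multiplicative} stability bound (case~2 of Theorem~\ref{thm:main-complete}) directly to the proxy sequence, which is valid because it holds against adaptive adversaries and $\hat\ell^t\ge 0$. That gives $\stb\le \epsilon\sum_t\Ex{\dotp{\pi^t(x^t)}{\hat\ell^t}^2}$ (note the square). After $\dotp{\pi^t(x^t)}{\hat\ell^t}^2\le m\sum_{j\in\pi^t(x^t)}\hat\ell^t(j)^2$, the key computation is the \emph{second moment} of the truncated geometric,
\[
\E\bigl[J^t(j)^2\mid \F^{t-1},\, j\in\pi^t(x^t)\bigr]\ \le\ \frac{2}{q^t(j)^2},
\]
so that $\sum_j q^t(j)\,\Ex{\hat\ell^t(j)^2}\le \sum_j q^t(j)^2\ell^t(j)^2\cdot \tfrac{2}{q^t(j)^2}=2\sum_j\ell^t(j)^2\le 2K$. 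The two factors of $q^t(j)$ (one from $\Pr[j\in\pi^t(x^t)]$, one from the square in the multiplicative bound) are what cancel the $1/q^t(j)^2$ and make the transductive bound independent of $L$. Your write-up never invokes the squared stability bound nor the second moment of $J^t(j)$; with only the first-moment estimate $\Ex{J^t(j)}\le 1/q^t(j)$ you cannot get there. For the small-separator case your ``pull out $mL$'' trick is fine and matches the paper's $\min\{2/q,\,qL^2\}\le 2L$ calculation up to constants, but in the transductive case the missing ingredient is exactly this second-moment cancellation.
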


This is our main result for adversarial variants of the contextual bandit problem. 
In the most well-studied setting, i.e. contextual bandits, we have $m=1$, so our regret bound is $O(d^{1/4}\sqrt{KT\log(N)})$ in the transductive setting and $O(d^{3/4}KT^{2/3}\sqrt{\log(N)})$ in the small separator setting.
Since for the transductive case $d \le T$ and for the small-separator case $d$ can be independent of $T$ (see discussion above), this implies sublinear regret for adversarial contextual bandits in either setting. 
To our knowledge this is the first oracle-efficient sublinear regret algorithm for variants of the contextual bandit problem. 
However, as we mentioned before, neither regret bound matches the optimal $O(\sqrt{KT\log(N)})$ rate for this problem, which can be achieved by computationally intractable algorithms.
An interesting open question is to develop computationally efficient, statistically optimal contextual bandit algorithms. 





\section{Switching Policy Regret}\label{sec:switching}

In this section we analyze switching regret for the contextual linear optimization setting, i.e. regret that compares to the best sequence of policies that switches at most $k$ times. Such a notion of regret was first analyzed by~\citet{Herbster1998} and several algorithms, that are not computationally efficient for large policy spaces, have been designed since then (e.g. \cite{Luo2015}). Our results provide the first computationally efficient switching regret algorithms assuming offline oracle access. 

For this setting we will assume that \emph{the learner knows the exact sequence $x^{1:T}$ of contexts ahead of time and not only the set of potential contexts.} The extension stems from the realization that we can simply think of time $t$ as part of the context at time-step $t$. Thus now the contexts are of the form $\tilde{x}^t=(t,x^t)$. Moreover, policies in the augmented context space are now of the form: $\tilde{\pi}(\tilde{x}^t) = \pi_{I(t)}(x^t)$, where $I(t)$ is a selector which maps a time-step $t$ to a policy $\pi\in \Pi$, with the constraint that the number of time-steps such that $I(t)\neq I(t-1)$ is at most $k$. If the original policy space $\Pi$ was of size $N$, the new policy space, denoted $\tilde{\Pi}$, is of size $\tilde{N}$ at most $T^kN^k$, since there are at most $T^k$ partitions of time into $k$ consequetive intervals and each of the $k$ intervals can be occupied by $N$ possible policies.  Moreover, in this augmented context space, the number of possible contexts, denoted $\tilde{X}$ is equal to $\tilde{d}=T$. 

Thus if we run $\contlin(X,\epsilon)$ on this augmented context and policy space, Theorem \ref{thm:main-complete}, bounds the regret against all policies in the augmented policy space $\tilde{\Pi}$.
Since, regret against the augmented policy space, corresponds to switching regret against the original set of policies, the following corollary is immediate:
\begin{corollary}[Contextual Switching Regret]\label{thm:contextual-switching-regret}
In the transductive complete information setting, $\contlin(\tilde{X},\epsilon)$ applied to the augmented policy space $\tilde{\Pi}$, achieves $k$-switching regret against any adaptively and adversarially chosen sequence of contexts and losses at most:
$O\left(m^{1/4}\sqrt{K k\log(TN)}T^{3/4}\right)$ for general loss functions in $[0,1]$ and $O\left(\sqrt{k\log(TN)}m^{5/4} T^{3/4}\right)$ for linear losses with loss vectors in $[0,1]^{K}$.
\end{corollary}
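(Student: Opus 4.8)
The plan is to make precise the reduction sketched just above the corollary and then invoke Theorem~\ref{thm:main-complete} as a black box. First I would formalize the augmentation: define augmented contexts $\tilde{x}^t = (t,x^t)$ living in $[T]\times\mathcal{X}$, and the augmented policy class $\tilde{\Pi}$ consisting of all maps $\tilde{\pi}(\tilde{x}^t) = \pi_{I(t)}(x^t)$ where $I:[T]\to\Pi$ is a selector whose value changes at most $k$ times. A key point is that the per-round loss of an augmented policy, $\ell(\tilde{\pi},(\tilde{x}^t,f^t)) = f^t(\pi_{I(t)}(x^t))$, depends on the time stamp only through the selector, so $f^t$ retains in the augmented instance whatever structural properties it had originally: if $f^t$ takes values in $[0,1]$ it still does, and if $f^t(a)=\dotp{a}{\ell^t}$ is linear and non-negative with $\ell^t\in[0,1]^K$ then so is the augmented loss, with the same sparsity bound $m=\max_{a\in\mathcal{A}}\|a\|_1$ on chosen actions. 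Hence the hypotheses of clause~\ref{thm:it:trans} and of the second clause of Theorem~\ref{thm:main-complete} carry over verbatim.

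Next I would track the two parameters that change under the augmentation. Because the learner is assumed to know the exact sequence $x^{1:T}$, it knows the set of augmented contexts $\tilde{X}=\{(t,x^t):t\in[T]\}$, which plays the role of the transductive context set and has size $\tilde{d}=|\tilde{X}|=T$. For the cardinality of $\tilde{\Pi}$: a selector with at most $k$ switches is determined by a partition of $[T]$ into at most $k$ consecutive blocks (at most $\binom{T}{k}\le T^k$ choices) together with an assignment of one of $N$ policies to each block ($N^k$ choices), so $\tilde{N}\le T^kN^k$ and therefore $\log\tilde{N}\le k\log(TN)$. By construction, the regret of $\contlin(\tilde{X},\epsilon)$ against the best fixed policy in $\tilde{\Pi}$ is precisely its $k$-switching regret against $\Pi$, so it suffices to bound the former.

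Finally I would substitute $d\mapsto T$ and $N\mapsto\tilde{N}$ (so $\log N\mapsto k\log(TN)$) into the optimized bounds of Theorem~\ref{thm:main-complete}, choosing $\epsilon$ as prescribed there. Clause~\ref{thm:it:trans} yields $O((\tilde{d}m)^{1/4}\sqrt{KT\log\tilde{N}}) = O(m^{1/4}\sqrt{Kk\log(TN)}\,T^{3/4})$ for general loss functions in $[0,1]$, and the second clause yields $O(\tilde{d}^{1/4}m^{5/4}\sqrt{T\log\tilde{N}}) = O(m^{5/4}\sqrt{k\log(TN)}\,T^{3/4})$ for linear non-negative losses with loss vectors in $[0,1]^K$, which are exactly the claimed bounds.

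I do not expect a genuine obstacle here; the statement really is a corollary. The only points that warrant a careful sentence rather than a calculation are (i) that the transductive hypothesis must be strengthened to knowing the ordered sequence $x^{1:T}$, not merely its underlying set, in order to form $\tilde{X}$; and (ii) that the structural assumptions on the losses (range, linearity, non-negativity, and the sparsity parameter $m$) are invariant under appending the time stamp to the context, so that the appropriate clause of Theorem~\ref{thm:main-complete} applies unchanged. Everything else is the parameter bookkeeping above. Computational efficiency of the optimization oracle over $\tilde{\Pi}$ is a separate matter, addressed elsewhere in this section and not needed for the regret statement itself.
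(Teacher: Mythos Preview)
Your proposal is correct and follows exactly the same route as the paper: the paper also declares the corollary ``immediate'' from the reduction to the augmented context space $\tilde{X}$ and policy class $\tilde{\Pi}$, with the same parameter substitutions $\tilde d=T$ and $\log\tilde N\le k\log(TN)$ plugged into Theorem~\ref{thm:main-complete}. Your write-up is in fact more explicit than the paper's, spelling out why the loss structure (range, linearity, sparsity $m$) is preserved and why the stronger transductive assumption of knowing the ordered sequence $x^{1:T}$ is needed.
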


It remains to show is that we can efficiently solve the offline optimization problem for the new policy space $\tilde{\Pi}$, if we have access to an optimization oracle for the original policy space $\Pi$. Then we can claim that $\contlin(\tilde{X},\epsilon)$ in the augmented context and policy space is also an efficient algorithm. We show that the latter is true via a dynamic programming approach. The approach generalizes beyond contextual linear optimization settings.

\begin{lemma}
The oracle $\tilde{M}$ in the augmented space, 
\begin{equation}
\tilde{M}(\tilde{\param}^{1:T}) = \arginf_{\tilde{\pi}\in \tilde{\Pi}} \sum_{\tau=1}^T \dotp{\tilde{\pi}(\tau,x_\tau)}{\ell^\tau}
\end{equation}
is computable in $O(T k)$ time, with $O(T^2)$ calls to the oracle over the original space, $M$.
This process can be amortized so that solving a sequence of $T$ problems in the augmented space requires $O(T^2)$ calls to $M$ in total.
\end{lemma}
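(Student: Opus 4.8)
The plan is to reduce the augmented offline optimization to a one‑dimensional weighted interval‑partitioning problem, solve it by dynamic programming, and invoke the original oracle $M$ only to price individual blocks of time. The first observation is purely structural: a policy $\tilde\pi\in\tilde\Pi$ is precisely the data of at most $k$ breakpoints splitting $\{1,\dots,T\}$ into at most $k+1$ contiguous blocks, together with a choice of a policy $\pi_B\in\Pi$ for each block $B$; and the objective $\sum_{\tau=1}^{T}\dotp{\tilde\pi(\tau,x_\tau)}{\ell^\tau}$ decomposes as a sum over blocks of the within‑block loss $\sum_{\tau\in B}\dotp{\pi_B(x_\tau)}{\ell^\tau}$. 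Hence $\tilde M(\tilde\param^{1:T})$ is obtained by minimizing, over all partitions of $\{1,\dots,T\}$ into at most $k+1$ contiguous blocks, the sum over blocks of the cost of the best single $\Pi$‑policy on that block.

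Next, for a block $[a,b]$ the best $\Pi$‑policy and its cost $c(a,b)$ are computed by a single call to $M$ on the sub‑sequence of original‑space outcomes $(x_\tau,\ell^\tau)$ with $\tau\in[a,b]$ (dropping the time tag); here we use the standing assumption that $M$ handles linear losses with possibly negative coordinates, since the accumulated losses also contain the Laplace perturbation. There are $\binom{T+1}{2}=O(T^2)$ blocks, so all values $c(a,b)$ are obtained with $O(T^2)$ oracle calls, and we record the minimizing policy returned by each call. We then run the standard interval‑partition dynamic program: with $V(s,t)$ the minimum cost of covering $\{1,\dots,t\}$ using at most $s$ blocks, $V(s,t)=\min_{0\le t'<t}\bigl(V(s-1,t')+c(t'+1,t)\bigr)$, $V(0,0)=0$, $V(0,t)=+\infty$ for $t\ge1$, and the optimum is $V(k+1,T)$ (we may assume exactly $k+1$ blocks, since a block can be split and its policy repeated). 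The table has $O(Tk)$ states, and back‑tracking through it together with the stored per‑block minimizers recovers the optimal partition and the policy on each piece, which is the value of $\tilde M$; consistent deterministic tie‑breaking for $\tilde M$ follows from that of $M$ plus a fixed deterministic rule for resolving ties in the DP's $\min$ operations.

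For the amortized statement, observe that across the $T$ rounds of $\contlin(\tilde X,\epsilon)$ the oracle $M$ is only ever invoked on contiguous blocks of time of the outcome sequence assembled so far; by memoizing $M$'s answer on each block the first time it is requested and re‑pricing only those blocks whose accumulated losses have changed, the entire $T$‑round execution uses $O(T^2)$ calls to $M$ in total instead of the naive $O(T^3)$, the remaining bookkeeping being polynomial in $T$ and $k$. Finally, nothing in the argument uses linearity beyond additivity of the loss over time steps and the availability of $M$ to optimize over a single block, so the same dynamic program implements $\tilde M$ whenever $M$ is given, which is the claimed generality beyond contextual linear optimization.

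I expect the main obstacle to be getting tie‑breaking right: one must check that the block‑wise decomposition is exact even when several $\Pi$‑policies or several partitions attain the minimum, so that the assembled $\tilde\pi$ is a well‑defined output of a deterministic oracle and the reduction to exactly $k+1$ blocks is consistent with it; and, for the amortized bound, one must pin down precisely which cached interval prices become stale as new real outcomes (and any freshly drawn perturbation) are incorporated, which is the delicate part of the accounting.
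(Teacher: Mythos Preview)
Your proposal is correct and follows essentially the same approach as the paper: both reduce $\tilde M$ to a dynamic program over pairs (time endpoint, number of switches/blocks), with the per-interval costs $c(a,b)$ obtained by $O(T^2)$ calls to the base oracle $M$, and both read off the answer from the resulting $O(Tk)$-state table. You are actually more careful than the paper on tie-breaking and on flagging the amortization issue with fresh per-round perturbations; the paper's proof states the recurrence and the $O(T^2)$ precomputation but does not spell out the amortized bookkeeping either.
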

\begin{proof}
Oracle $\tilde{M}$ must compute the best sequence of policies $\pi^1,\ldots, \pi^T$, such that $\pi^t\neq \pi^{t-1}$ at most $k$ times. Let $R(t,q)$ denote the loss of the optimal sequence of policies up to time-step $t$ and with at most $q$ switches. Then it is easy to see that:
\begin{equation}
R(t,q) = \min_{\tau\leq t} R(\tau,q-1) + \Ell\left(M(\param^{\tau+1:t}),\param^{\tau+1:t}\right),
\end{equation}
i.e. compute the best sequence of policies up till some time step $\tau\leq t$ with at most $q-1$ switches and then augment it with the optimal fixed policy for the period $(\tau+1,t)$. Then take the best over possible times $\tau\leq t$.

This can be implemented by first invoking oracle $M$ for every possible period $[\tau_1,\tau_2]$. Then filling up iteratively all the entries $R(t,q)$. For $q=0$, the problem $R(t,0)$ corresponds to exactly the original oracle problem $M$, hence for each $t$, we can solve the problem $R(t,0)$. Computing all values of $R(t,q)$ then takes time $T k$ in total.
\end{proof}

\begin{example}(Efficient switching regret for non-contextual problems)
When the original space has no contexts, our result above implies the first efficient sub-linear switching regret algorithm for online linear optimization. 
In this case, the transductivity assumption is trivially satisfied as there is no contextual information, and our the instance of $\contlin$ runs on a sequence of contexts that just encode time. 
One concrete example where linear optimization with both positive and negative weights is polynomially solvable is the online shortest path problem on a directed acyclic graph.
Our result implies a fully efficient, sublinear switching regret algorithm for the online shortest-path problem on a DAG, and our algorithm performs $t$ shortest-path computations at the $t$th iteration. 
The result also covers other examples, such as online matroid optimization. \ed
\end{example}

\section{Efficient Path Length Regret Bounds}\label{sec:path-length}

In this section we examine a variant of our $\contalg(\epsilon)$ algorithm that is efficient and achieves regret that is upper bounded by structural properties of the utility sequence. Our algorithm is framed in terms of a generic predictor that the learner has access to and the regret is upper bounded by the deviation of the true loss vector from the predictor. For specific instances of the predictor this leads to path length bounds \cite{Chiang2012} or variance based bounds \cite{Hazan2010}. Our approach is general enough to allow for generalizations of variance and path length that can incorportate contextual information and can be viewed as an efficient version and a generalization of the results of~\citet{Rakhlin2013a} on learning with predictable sequences. Such results have also found applications in learning in game theoretic environments \cite{Rakhlin2013,syrgkanis2015fast}.

The algorithm is identical to $\contlin(\epsilon)$ with the exception that now the policy that is used at time-step $t$ is:
\begin{equation}
\pi^t = M(\{z\}\cup \param^{1:t-1}\cup(x^t,\pred^t))
\end{equation}
where $\pred^t\in \{0,1\}^K\rightarrow \R^K$ is an arbitrary loss function predictor, which can depend on the observed history up to time $t$. This predictor can be interpreted as partial side information that the learner has about the loss function that will arrive at time-step $t$. Given such a predictor we can define the error between the predictor and the actual sequence:
\begin{equation}
{\cal E}^t=\E\left[ \|f^t-\pred^t\|_{*}^2\right]
\end{equation}

\begin{theorem}[Predictor based regret bounds]\label{thm:oftpl-bound}
The regret of $\contalg(X,\epsilon)$ with predictors and complete information, 
\begin{enumerate}
\item In the transductive setting is upper bounded by: \begin{equation*}
\vspace{-1em}\hspace{-1em}\regret \leq 
4\epsilon K \sum_{t=1}^T {\cal E}^t+\frac{10\sqrt{dm}\log(N)}{\epsilon}
\end{equation*}
\item In the small separator setting is upper bounded by:
\begin{equation*}
\vspace{-.7em}\hspace{-1em}\regret \leq 
4\epsilon K d \sum_{t=1}^T {\cal E}^t +\frac{10\sqrt{dm}\log(N)}{\epsilon}
\end{equation*}
\end{enumerate}
Picking $\epsilon$ optimally gives regret
$O\left((dm)^{1/4}\sqrt{K\log(N)\sum_{t=1}^T {\cal E}^t}\right)$ in the first setting and 
$O\left(m^{1/4}d^{3/4}\sqrt{K\log(N)\sum_{t=1}^T {\cal E}^t}\right)$ in the second.
\end{theorem}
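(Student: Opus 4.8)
The plan is to follow the template of the proof of Theorem~\ref{thm:main-complete} --- bound a \stb-type quantity and an \error-type quantity and add them --- but with a decomposition tailored to the predictor so that only the prediction error $f^t-\pred^t$, and not $f^t$ itself, enters the stability estimate. Fix a fake-sample sequence $\{z\}$ and introduce the auxiliary ``ghost'' policy $\hat\pi^t = M(\{z\}\cup\param^{1:t}) = M(\{z\}\cup\param^{1:t-1}\cup(x^t,f^t))$, i.e.\ the policy $\contalg$ would have chosen at round $t$ had it fed the true loss $f^t$ into the oracle instead of the predictor $\pred^t$. Writing $\ell(\pi^t,f^t)=\ell(\hat\pi^t,f^t)+\ell(\pi^t-\hat\pi^t,\pred^t)+\ell(\pi^t-\hat\pi^t,f^t-\pred^t)$ (using linearity of $\ell$ in its loss argument) and regrouping, the regret decomposes as
\begin{align*}
\regret =\ & \sum_{t=1}^T\E\!\left[\ell(\pi^t-\hat\pi^t,\,f^t-\pred^t)\right] \\
&+ \sup_{\pi^\star}\E\!\left[\sum_{t=1}^T\bigl(\ell(\pi^t,\pred^t)+\ell(\hat\pi^t,f^t-\pred^t)-\ell(\pi^\star,f^t)\bigr)\right].
\end{align*}
I expect the first term to be the stability contribution and the second to reduce to \error.

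For the second (be-the-leader) term, I would apply the standard follow-the-leader / be-the-leader inequality to the \emph{refined} outcome sequence that, after the block of fake samples $\{z\}$, presents at round $t$ the \emph{two} sub-outcomes $(x^t,\pred^t)$ and then $(x^t,f^t-\pred^t)$; their losses sum to the loss of $(x^t,f^t)$, so this sequence has the same cumulative loss and the same optimum as the real one. The key observation is that the leader after consuming $(x^t,\pred^t)$ is precisely $\pi^t$ and the leader after additionally consuming $(x^t,f^t-\pred^t)$ is precisely $\hat\pi^t$, so be-the-leader yields $\sum_t\bigl(\ell(\pi^t,\pred^t)+\ell(\hat\pi^t,f^t-\pred^t)\bigr)-\sum_t\ell(\pi^\star,f^t)\le \Ell(\pi^\star,\{z\})-\min_\pi\Ell(\pi,\{z\})$. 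Taking the expectation over $\{z\}$, the right-hand side is exactly the \error~quantity, and the refined Laplace-symmetry analysis of \error~from Theorem~\ref{thm:main-complete} bounds it by $\tfrac{10}{\epsilon}\sqrt{dm}\log(N)$, with no change due to the predictor.

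For the first (stability) term, observe that $\pi^t$ and $\hat\pi^t$ are oracle outputs on inputs whose cumulative loss vectors differ by exactly $f^t-\pred^t$ (the loss at context $x^t$ being $\pred^t$ versus $f^t$); in particular $\pi^t=\hat\pi^t$ unless this perturbation flips the Laplace-perturbed leader, an event whose probability is controlled by precisely the argument behind the \stb~bound of Theorem~\ref{thm:main-complete}, giving $\Pr_{\{z\}}[\pi^t\neq\hat\pi^t\mid\text{history},f^t,\pred^t]=O(\epsilon K\|f^t-\pred^t\|_*)$ in the transductive setting and $O(\epsilon K d\|f^t-\pred^t\|_*)$ in the small-separator setting. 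Since $|\ell(\pi^t-\hat\pi^t,f^t-\pred^t)|\le 2\|f^t-\pred^t\|_*$ and this quantity vanishes when $\pi^t=\hat\pi^t$, and since $f^t,\pred^t$ are independent of the round's fresh perturbation, conditioning and taking expectations gives $\E[\ell(\pi^t-\hat\pi^t,f^t-\pred^t)]\le 4\epsilon K\,\E[\|f^t-\pred^t\|_*^2]=4\epsilon K\,{\cal E}^t$ (respectively $4\epsilon K d\,{\cal E}^t$). Summing over $t$, adding the two contributions, and choosing $\epsilon$ to balance them yields the stated bounds $O((dm)^{1/4}\sqrt{K\log(N)\sum_t{\cal E}^t})$ and $O(m^{1/4}d^{3/4}\sqrt{K\log(N)\sum_t{\cal E}^t})$.

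The main obstacle is exactly this stability estimate. The temptation is to compare $\pi^t$ directly against the predictor-free leader $M(\{z\}\cup\param^{1:t})$ and bound the loss gap crudely by $2\|f^t\|_*$; that route produces a cross term $\epsilon K\|f^t\|_*\|f^t-\pred^t\|_*$, which after optimizing $\epsilon$ is not governed by $\sum_t{\cal E}^t$ alone. The fix is to keep the ``wrong-direction'' piece $\ell(\pi^t-\hat\pi^t,\pred^t)$ out of the stability bucket and instead absorb it into the be-the-leader accounting, which is what the two-sub-outcome refinement of the sequence accomplishes; the crux is to verify carefully that the intermediate leaders of this refined sequence really coincide with $\pi^t$ and $\hat\pi^t$, and that the \error~bound is untouched. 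Beyond that, the computation is a re-run of Appendix~\ref{app:complete-info} with $f^t-\pred^t$ substituted for $f^t$ wherever the loss magnitude appears in the stability estimate.
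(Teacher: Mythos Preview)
Your proposal is correct and is essentially identical to the paper's proof: the paper too introduces the ghost policy $\rho^t=M(\{z\}\cup\param^{1:t})$ (your $\hat\pi^t$), applies be-the-leader to the refined sequence $(x^1,\pred^1),(x^1,f^1-\pred^1),(x^2,\pred^2),\ldots$ to obtain the \error\ term unchanged, and then bounds the residual $\E[\Delta\pred^t(a^t)-\Delta\pred^t(b^t)]$ by the same Laplace-stability argument with $f^t-\pred^t$ in place of $f^t$. The only cosmetic difference is that in the transductive case the paper (and its stability lemma) works with the event $\{\pi^t(x^t)\neq\hat\pi^t(x^t)\}$ rather than $\{\pi^t\neq\hat\pi^t\}$, which is what actually yields the factor $K$ rather than $Kd$ there.
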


Even without contexts, our result is the first efficient path length regret algorithm for online combinatorial optimization. For instance, for the case of non-contextual, online combinatorial optimization an instantiation of our algorithm achieves regret $O\left( m^{1/4} \sqrt{K\log(K) \sum_{t=1}^T{\cal E}^t}\right)$ against adaptive adversaries.
For learning with expert, $m=1$ and $K$ is number of experts, the results of \citet{Rakhlin2013a} provide a non-efficient $O\left(\sqrt{\log(K) \sum_{t=1}^T{\cal E}^t}\right)$. Thus our bound incurs an extra cost of $\sqrt{K}$ in comparison. Removing this extra factor of $\sqrt{K}$ in an efficient manner is an interesting open question.

\section{Discussion}
In this work we give fully oracle efficient algorithms for adversarial online learning problems including contextual experts, contextual bandits, and problems involving linear optimization or switching experts.
Our main algorithmic contribution is a new Follow-The-Perturbed-Leader style algorithm that adds perturbed low-dimensional statistics.
We give a refined analysis for this algorithm that guarantees sublinear regret for all of these problems. 
All of our results hold against adaptive adversaries, both with full and partial feedback. 

While our algorithms achieve sublinear regret in all problems we consider, we do not always match the regret bounds attainable by inefficient alternatives. 
An interesting direction for future work is whether fully oracle-based algorithms can achieve optimal regret bounds in the settings we consider.
Another interesting direction focuses on a deeper understanding of the small-separator condition and whether it enables efficient non-transductive learning in other settings.
We look forward to studying these questions in future work.

\bibliographystyle{icml2016}
\bibliography{poa_survey}

\appendix
\onecolumn

\begin{appendix}

\begin{center}
\bf \Large Supplementary material for \\ ``Efficient Algorithms for Adversarial Contextual Learning''
\end{center}

\section{Omitted Proofs from Section~\ref{sec:oracles}}
\label{app:oracles}

\subsection{Proof of Theorem~\ref{thm:general-regret}}


We prove the theorem by analyzing a slightly modified algorithm, that only draws the perturbation once at the beginning of the learning process but is otherwise identical.
The bulk of the proof is devoted to bounding this modified algorithm's regret against oblivious adversaries, i.e. an adversary that chooses the outcomes $\param^{1:T}$ before the learning process begins. 
We use this regret bound along with a reduction due to Hutter and Poland~\cite{Hutter2005} (see their Lemma 12) to obtain a regret bound for Algorithm~\ref{defn:ftpl} against adaptive adversaries.
We provide a proof of this reduction in Appendix~\ref{sec:app-oblivious} and proceed here with the analysis of the modified algorithm.

To bound the regret of the modified algorithm, consider letting the algorithm observe $\param^t$ ahead of time, so that at each time step $t$, the algorithm plays $\pi^{t+1}=\Or{\{z\}\cup\param^{1:t}}$. 
Notice trivially that the regret of the modified algorithm is,
\begin{align*}
\regret = \sum_{t=1}^T\ell(\pi^t,\param^t) - \min_{\pi \in \Pi} \ell(\pi^*,\param^t) = \sum_{t=1}^T\ell(\pi^t,\param^t) - \ell(\pi^{t+1},\param^t) + \sum_{t=1}^T\ell(\pi^{t+1},\param^t) - \min_{\pi \in \Pi}\ell(\pi^*,\param^t) 
\end{align*}
The first sum here is precisely the \stb~term in the bound, so we must show that the second sum is bounded by \error.
This is proved by induction in the following lemma.


\begin{lemma}[Be-the-leader with fixed sample perturbations]\label{lem:pbtl} For any realization of the sample sequence $\{z\}$ and for any policy $\pi^*$:
\begin{equation}
\sum_{t=1}^T \left( \ell(\pi^{t+1},\param^t)-\ell(\pi^*,\param^t) \right) \leq   \max_{\pi\in \Pi} \sum_{z^{\tau}\in \{z\}}\ell(\pi,z^\tau)- \min_{\pi\in \Pi}\sum_{z^{\tau}\in \{z\}}\ell(\pi,z^\tau)
\end{equation}
\end{lemma}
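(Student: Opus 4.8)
The plan is to prove Lemma~\ref{lem:pbtl} by induction on $T$, using the fact that $\pi^{t+1}$ is the exact minimizer of the cumulative loss on the sequence $\{z\}\cup\param^{1:t}$ (this is the ``be-the-leader'' trick of \citet{Kalai2005}, adapted to the case where the perturbation is itself a fake sample sequence $\{z\}$).

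\textbf{Base case.} For $T=0$ the left-hand side is an empty sum, so the inequality reads $0 \leq \max_{\pi} \sum_{z^\tau\in\{z\}}\ell(\pi,z^\tau) - \min_{\pi}\sum_{z^\tau\in\{z\}}\ell(\pi,z^\tau)$, which holds trivially since the maximum of any function is at least its minimum. Alternatively one can take $T=1$ as the base case: then $\pi^2 = \Or{\{z\}\cup\param^1}$ minimizes $\sum_{z^\tau\in\{z\}}\ell(\cdot,z^\tau) + \ell(\cdot,\param^1)$, so for any $\pi^*$ we have $\sum_{z^\tau}\ell(\pi^2,z^\tau) + \ell(\pi^2,\param^1) \leq \sum_{z^\tau}\ell(\pi^*,z^\tau) + \ell(\pi^*,\param^1)$, and rearranging gives $\ell(\pi^2,\param^1) - \ell(\pi^*,\param^1) \leq \sum_{z^\tau}\ell(\pi^*,z^\tau) - \sum_{z^\tau}\ell(\pi^2,z^\tau) \leq \max_\pi \sum_{z^\tau}\ell(\pi,z^\tau) - \min_\pi \sum_{z^\tau}\ell(\pi,z^\tau)$.

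\textbf{Inductive step.} Suppose the claim holds for sequences of length $T-1$. The key observation is that $\pi^{T+1} = \Or{\{z\}\cup\param^{1:T}}$ minimizes the full augmented cumulative loss. I would apply the induction hypothesis with the specific comparator $\pi^* = \pi^{T+1}$ to the first $T-1$ terms, obtaining
\begin{equation*}
\sum_{t=1}^{T-1}\left(\ell(\pi^{t+1},\param^t) - \ell(\pi^{T+1},\param^t)\right) \leq \max_\pi \sum_{z^\tau\in\{z\}}\ell(\pi,z^\tau) - \min_\pi \sum_{z^\tau\in\{z\}}\ell(\pi,z^\tau).
\end{equation*}
Adding $\ell(\pi^{T+1},\param^T)$ to both sides gives $\sum_{t=1}^{T}\ell(\pi^{t+1},\param^t) \leq \sum_{t=1}^{T-1}\ell(\pi^{T+1},\param^t) + \ell(\pi^{T+1},\param^T) + \big[\text{RHS}\big] = \sum_{t=1}^T \ell(\pi^{T+1},\param^t) + \big[\text{RHS}\big]$. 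Now since $\pi^{T+1}$ is the minimizer of $\Ell(\cdot,\{z\}\cup\param^{1:T}) = \sum_{z^\tau\in\{z\}}\ell(\cdot,z^\tau) + \sum_{t=1}^T\ell(\cdot,\param^t)$ over $\Pi$, for any comparator $\pi^*$ we have $\sum_{t=1}^T\ell(\pi^{T+1},\param^t) \leq \sum_{t=1}^T\ell(\pi^*,\param^t) + \sum_{z^\tau\in\{z\}}\big(\ell(\pi^*,z^\tau) - \ell(\pi^{T+1},z^\tau)\big)$. Combining these two inequalities and rearranging yields
\begin{equation*}
\sum_{t=1}^T\left(\ell(\pi^{t+1},\param^t) - \ell(\pi^*,\param^t)\right) \leq \sum_{z^\tau\in\{z\}}\big(\ell(\pi^*,z^\tau) - \ell(\pi^{T+1},z^\tau)\big) + \max_\pi \sum_{z^\tau}\ell(\pi,z^\tau) - \min_\pi \sum_{z^\tau}\ell(\pi,z^\tau).
\end{equation*}
I would then need to argue this simplifies to the desired bound; the cleanest route is actually to restructure the induction so that the comparator term in $\{z\}$ is absorbed, which I discuss below.

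\textbf{Main obstacle.} The delicate point is bookkeeping the $\{z\}$-terms: a naive induction produces an extra $\sum_{z^\tau}(\ell(\pi^*,z^\tau) - \ell(\pi^{T+1},z^\tau))$ that is not obviously dominated. The standard fix is to prove a slightly stronger statement by induction — namely, treat the fake samples $\{z\}$ as if they were the outcomes $\param^0$ at a ``time step zero'' and prove the classical be-the-leader inequality $\sum_{t=0}^T \ell(\pi^{t+1},\param^t) \leq \sum_{t=0}^T\ell(\pi^{T+1},\param^t)$ for the augmented sequence, where $\pi^1 = \Or{\{z\}}$. This is exactly Lemma 3.1 of \citet{Kalai2005} applied to the sequence $(\{z\}, \param^1,\dots,\param^T)$. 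Peeling off the $t=0$ term, $\ell(\pi^1,\param^0) = \sum_{z^\tau\in\{z\}}\ell(\Or{\{z\}},z^\tau) = \min_\pi \sum_{z^\tau}\ell(\pi,z^\tau)$, while $\ell(\pi^{T+1},\param^0) = \sum_{z^\tau}\ell(\pi^{T+1},z^\tau) \leq \max_\pi\sum_{z^\tau}\ell(\pi,z^\tau)$, and for the comparator $\sum_{t=1}^T\ell(\pi^*,\param^t) \geq \sum_{t=1}^T\ell(\pi^{T+1},\param^t) \geq \min_{\pi^*}(\cdot)$ follows because $\pi^{T+1}$ minimizes over all of $\Pi$. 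Rearranging the peeled inequality gives precisely $\sum_{t=1}^T(\ell(\pi^{t+1},\param^t) - \ell(\pi^*,\param^t)) \leq \max_\pi\sum_{z^\tau}\ell(\pi,z^\tau) - \min_\pi\sum_{z^\tau}\ell(\pi,z^\tau)$ for every $\pi^*\in\Pi$, which is the claim. The whole argument is elementary once one commits to the ``$\{z\}$ as time zero'' viewpoint; the only care needed is to make sure consistent tie-breaking of $\Or{\cdot}$ is invoked so that all the oracle calls and the induction are well-defined.
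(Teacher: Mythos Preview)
Your overall strategy---treat $\{z\}$ as a ``time-zero'' outcome and invoke the classical be-the-leader lemma on the augmented sequence---is exactly what the paper does. The paper proves by induction on $T$ that
\[
\sum_{z^\tau\in\{z\}}\ell(\pi^1,z^\tau) + \sum_{t=1}^T \ell(\pi^{t+1},\param^t) \;\leq\; \sum_{z^\tau\in\{z\}}\ell(\pi^*,z^\tau) + \sum_{t=1}^T \ell(\pi^*,\param^t)
\]
for \emph{every} $\pi^*\in\Pi$ (applying the hypothesis with $\pi^*=\pi^{T+2}$ and then using optimality of $\pi^{T+2}$ on the full augmented loss), and then rearranges. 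This is precisely your ``Main obstacle'' fix.

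There is, however, one genuine slip in your write-up. In the peeling step you assert that ``for the comparator $\sum_{t=1}^T\ell(\pi^*,\param^t) \geq \sum_{t=1}^T\ell(\pi^{T+1},\param^t)$ follows because $\pi^{T+1}$ minimizes over all of $\Pi$.'' This is false: $\pi^{T+1}$ minimizes the \emph{augmented} cumulative loss $\sum_{z^\tau}\ell(\cdot,z^\tau)+\sum_{t=1}^T\ell(\cdot,\param^t)$, not the unperturbed sum $\sum_{t=1}^T\ell(\cdot,\param^t)$. The correct move is to use optimality of $\pi^{T+1}$ on the full sum including $t=0$, i.e.\ $\sum_{t=0}^T\ell(\pi^{T+1},\param^t)\leq\sum_{t=0}^T\ell(\pi^*,\param^t)$, and only then peel off the $t=0$ term on both sides. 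That immediately yields
\[
\sum_{t=1}^T\bigl(\ell(\pi^{t+1},\param^t)-\ell(\pi^*,\param^t)\bigr)\;\leq\;\sum_{z^\tau}\ell(\pi^*,z^\tau)-\sum_{z^\tau}\ell(\pi^1,z^\tau)\;\leq\;\max_\pi\sum_{z^\tau}\ell(\pi,z^\tau)-\min_\pi\sum_{z^\tau}\ell(\pi,z^\tau),
\]
with no leftover term. Once you make this correction, your argument and the paper's coincide; your first ``Inductive step'' attempt (and the extra $\sum_{z^\tau}(\ell(\pi^*,z^\tau)-\ell(\pi^{T+1},z^\tau))$ it produces) can be discarded entirely.
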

\begin{proof}
Denote with $k$ the length of sequence $\{z\}$. Consider the sequence $\{z\}\cup \param^{1:T}$ and let $a^1= M(\{z\})$. We will show that for any policy $\pi^*$:
\begin{align}\label{eqn:induction}
\sum_{\tau=1}^{k} \ell(\pi^1,z^\tau) + \sum_{t=1}^T \ell(\pi^{t+1},\param^t) \leq 
\sum_{\tau=1}^{k} \ell(\pi^*,z^\tau) + \sum_{t=1}^T \ell(\pi^*,\param^t)
\end{align}
For $T=0$, the latter trivially holds by the definition of $a^1$. Suppose it holds for some $T$, we will show that it holds for $T+1$. Since the induction hypothesis holds for any $\pi^*$, applying it for $a^{T+2}$, i.e.,:
\begin{align*}
\sum_{\tau=1}^{k} \ell(\pi^1,z^\tau) + \sum_{t=1}^{T+1} \ell(\pi^{t+1},\param^t) \leq~&
\sum_{\tau=1}^{k} \ell(\pi^{T+2},z^\tau) + \sum_{t=1}^T \ell(\pi^{T+2},\param^t)+ \ell(\pi^{T+2},\param^{T+1})\\
=~&
\sum_{\tau=1}^{k} \ell(\pi^{T+2},z^\tau) + \sum_{t=1}^{T+1} \ell(\pi^{T+2},\param^t)
\end{align*}
By definition of $a^{T+2}$ the latter is at most: $
\sum_{\tau=1}^{k} \ell(\pi^*,z^\tau) + \sum_{t=1}^{T+1} \ell(\pi^*,\param^t)$ for any $\pi^*$. Which proves the induction step. Thus, by re-arranging Equation \eqref{eqn:induction} we get:
\begin{align*}
\sum_{t=1}^T \left( \ell(\pi^{t+1},\param^t)-\ell(\pi^*,\param^t)\right) ~\leq~& \sum_{\tau=1}^k \left( \ell(\pi^*,z^\tau)-\ell(\pi^1,z^\tau)\right)
 ~\leq~  \max_{\pi\in \Pi} \sum_{\tau=1}^k \ell(\pi,z^{\tau})-\min_{\pi\in \Pi} \sum_{\tau=1}^k \ell(\pi,z^\tau)
\end{align*}
\end{proof}

Thus the regret of the modified algorithm against an oblivious adversary is bounded by $\stb + \error$.
By applying the reduction of Hutter and Poland~\cite{Hutter2005} (see Appendix~\ref{sec:app-oblivious} for a proof sketch), the regret of Algorithm~\ref{defn:ftpl} is bounded is bounded in the same way.

\subsection{From adaptive to oblivious adversaries}\label{sec:app-oblivious} 
We will utilize a generic reduction provided in Lemma 12 of \cite{Hutter2005}, which states that given that in Algorithm~\ref{defn:ftpl} we draw independent randomization at each iteration, it suffices to provide a regret bound only for oblivious adversaries, i.e., the adversary picks a fixed sequence $\param^{1:T}$ ahead of time without observing the policies of the player. Moreover, for any such fixed sequence of an oblivious adversary, the expected utility of the algorithm can be easily shown to be equal to the expected utility if we draw a single random sequence $\{z\}$ ahead of time and use the same random vector all the time. 

The proof is as follows: by linearity of expectation and the fact that each sequence $\{z\}^t$ drawn at each time-step $t$ is identically distributed:
\begin{align*}
\E_{\{z\}^1,\ldots,\{z\}^t}\left[\sum_{t=1}^T u(M(\{z\}^t\cup \param^{1:t-1}),\param^t)\right] =~& \sum_{t=1}^T \E_{\{z\}^t}\left[u(M(\{z\}^t\cup \param^{1:t-1}),\param^t)\right]\\
 =~& \sum_{t=1}^T \E_{\{z\}^1}\left[u(M(\{z\}^1\cup \param^{1:t-1}),\param^t)\right]\\
=&\E_{\{z\}^1}\left[\sum_{t=1}^T u(M(\{z\}^1\cup \param^{1:t-1}),\param^t)\right]
\end{align*}
The latter is equivalent to the expected reward if we draw a single random sequence $\{z\}$ ahead of time and use the same random vector all the time. Thus it is sufficient to upper bound the regret of this modified algorithm, which draws randomness only once.

Thus it is sufficient to upper bound the regret of this modified algorithm, which draws randomness only once.

\section{Omitted Proofs from Section~\ref{sec:cont-lin}}
\label{app:complete-info}

\subsection{Bounding the Laplacian Error} 
The upper bound on the $\error$ term is identical in all settings, since it only depends on the input noise distribution, which is the same for all variants and for which it does not matter whether $X$ is the set of contexts that will arrive or a separator. In subsequent sections we will upper bound the stability of the algorithm in each setting.

\begin{lemma}[Laplacian Error Bound]\label{lem:laplace-error-bound-lin}
Let $\{z\}$ denote a sample from the random sequence of fake samples used by $\contlin(X,\epsilon)$. Then:
\begin{equation}
\textsc{Error}=\E_{\{z\}}\left[\max_{\pi \in \Pi} \sum_{x\in X} \dotp{\pi(x)}{\ell_x}\right]-\E_{\{z\}}\left[\min_{\pi \in \Pi} \sum_{x\in X} \dotp{\pi(x)}{\ell_x}\right]\leq \frac{10}{\epsilon}\sqrt{dm}\log(N)
\end{equation}
\end{lemma}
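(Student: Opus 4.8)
The goal is to bound the expected gap between the maximum and minimum, over all $N$ policies, of the random linear objective $\sum_{x\in X}\dotp{\pi(x)}{\ell_x}$, where the coordinates $\ell_x(j)$ are i.i.d. $\mathrm{Laplace}(\epsilon)$. The plan is to first symmetrize: since the $\ell_x$ are centered and symmetric, $\E[\max_\pi \sum_x \dotp{\pi(x)}{\ell_x}] = \E[-\min_\pi \sum_x \dotp{\pi(x)}{\ell_x}] = \E[\max_\pi \sum_x \dotp{\pi(x)}{-\ell_x}]$, so $\error = 2\,\E[\max_{\pi\in\Pi}\sum_{x\in X}\dotp{\pi(x)}{\ell_x}]$. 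This reduces the task to a one-sided maximal inequality over a finite set of $N$ vectors.

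Next I would set, for each policy $\pi$, the random variable $S_\pi = \sum_{x\in X}\dotp{\pi(x)}{\ell_x}$, which is a linear combination of the independent Laplace variables $\{\ell_x(j)\}_{x\in X, j\in[K]}$ with coefficients $\pi(x)(j)\in\{0,1\}$. The number of nonzero coefficients in $S_\pi$ is at most $\sum_{x\in X}\|\pi(x)\|_1 \le dm$. The key step is a moment generating function bound: for $\mathrm{Laplace}(\epsilon)$ we have $\E[e^{\lambda L}] = \frac{1}{1-\lambda^2/\epsilon^2}$ for $|\lambda| < \epsilon$, which for $|\lambda|\le \epsilon/\sqrt{2}$ is at most $e^{2\lambda^2/\epsilon^2}$ (using $\frac{1}{1-u}\le e^{2u}$ for $u\le 1/2$). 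Hence $S_\pi$ is sub-Gaussian-like in this regime: $\E[e^{\lambda S_\pi}] \le e^{2\lambda^2 dm/\epsilon^2}$ for $|\lambda|\le\epsilon/\sqrt{2}$. Applying the standard maximal inequality for a finite family — $\E[\max_\pi S_\pi] \le \frac{\log N}{\lambda} + \frac{1}{\lambda}\log\E[\max_\pi e^{\lambda S_\pi}] \le \frac{\log N}{\lambda} + \frac{2\lambda dm}{\epsilon^2}$ after bounding the max of the MGFs by the sum — and optimizing $\lambda$ (subject to the constraint $\lambda\le\epsilon/\sqrt{2}$) yields $\E[\max_\pi S_\pi] = O\!\big(\frac{\sqrt{dm\log N}}{\epsilon}\big)$, so $\error = O\!\big(\frac{\sqrt{dm}\log N}{\epsilon}\big)$ up to the stated constant.

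The main obstacle — and the reason the bound carries $\log N$ rather than $\sqrt{\log N}$ — is the constraint $|\lambda|\le\epsilon/\sqrt{2}$ on the sub-exponential regime of the Laplace MGF. When the unconstrained optimum $\lambda^\star \sim \epsilon\sqrt{\log N/(dm)}$ exceeds $\epsilon/\sqrt{2}$, i.e. when $\log N \gtrsim dm$, one must instead take $\lambda = \epsilon/\sqrt{2}$ at the boundary, which gives the term $\frac{\sqrt{2}\log N}{\epsilon} + \frac{\sqrt 2\, dm}{\epsilon}$; since $N\le d^K$ and each policy touches at most $dm$ coordinates one checks this is still $O(\frac{\sqrt{dm}\log N}{\epsilon})$ in the relevant parameter range, but the bookkeeping to get the clean constant $10$ requires care. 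I would handle this by splitting into the two cases $\log N \le dm$ and $\log N > dm$ and verifying the constant in each; alternatively one can use a slightly sharper truncation/Bernstein-type tail bound for weighted sums of Laplaces and integrate the tail directly, which also reproduces the $\sqrt{dm}\log N/\epsilon$ scaling.
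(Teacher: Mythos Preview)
Your approach is essentially the same as the paper's: symmetrize via the symmetry of the Laplace distribution to reduce to $2\,\E[\max_\pi S_\pi]$, then apply the standard MGF/union-bound maximal inequality using the Laplace MGF $\frac{1}{1-\lambda^2/\epsilon^2}$ together with $\frac{1}{1-u}\le e^{2u}$ for small $u$. All the key ideas match.

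The one place you overcomplicate things is the choice of $\lambda$. You worry about the case split $\log N\lessgtr dm$ because the unconstrained optimizer may violate the MGF constraint. The paper sidesteps this entirely by simply taking $\lambda=\frac{\epsilon}{2\sqrt{dm}}$, which always satisfies $\lambda\le\epsilon/2$ since $dm\ge 1$. Plugging in gives
\[
\E\Big[\max_{\pi}S_\pi\Big]\le \frac{2\sqrt{dm}\log N}{\epsilon}+\frac{\sqrt{dm}}{\epsilon}\le \frac{5\sqrt{dm}\log N}{\epsilon},
\]
the last step using only $N\ge 2$. Doubling yields the stated constant $10$. No case analysis, no appeal to $N\le d^K$, and no Bernstein-type refinement is needed; the paper deliberately accepts the $\log N$ (rather than $\sqrt{\log N}$) dependence to keep $\lambda$ feasible uniformly.
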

\begin{proof}
First we start by observing that each random variable $\ell_x(j)$ is distributed i.i.d. according to a Laplace$(\epsilon)$ distribution. Since a Laplace distribution is symmetric around $0$, we get that $\ell_x(j)$ and $-\ell_x(j)$ are distributed identically. Thus we can write:
\begin{equation*}
\E_{\{z\}}\left[\min_{\pi \in \Pi} \sum_{x\in X} \dotp{\pi(x)}{\ell_x}\right] = \E_{\{z\}}\left[\min_{\pi \in \Pi} \sum_{x\in X}\dotp{\pi(x)}{-\ell_x}\right] = -\E_{\{z\}}\left[\max_{\pi \in \Pi} \sum_{x\in X} \dotp{\pi(x)}{\ell_x}\right]
\end{equation*}
Hence we get:
\begin{equation}
\textsc{Error} = 2\cdot\E_{\{z\}}\left[\max_{\pi \in \Pi} \sum_{x\in X} \dotp{\pi(x)}{\ell_x}\right]
\end{equation}
We now bound the latter expectation via a moment generating function approach. For any $\lambda \geq 0$: 
\begin{align*}
\E_{\{z\}}\left[\max_{\pi \in \Pi} \sum_{x\in X} \dotp{\pi(x)}{\ell_x}\right] =~&  \frac{1}{\lambda}\E_{\{z\}}\left[\max_{\pi \in \Pi} \lambda\sum_{x\in X} \dotp{\pi(x)}{\ell_x}\right]\\
=~& \frac{1}{\lambda} \log\left\{\exp\left\{\E_{\{z\}}\left[\max_{\pi \in \Pi} \lambda\sum_{x\in X} \dotp{\pi(x)}{\ell_x}\right]\right\}\right\}
\end{align*} 
By convexity and monotonicity of the exponential function:
\begin{align*}
\E_{\{z\}}\left[\max_{\pi \in \Pi} \sum_{x\in X}\dotp{\pi(x)}{\ell_x}\right] \leq~& \frac{1}{\lambda} \log\left\{\E_{\{z\}}\left[\max_{\pi \in \Pi} \exp\left\{\lambda\sum_{x\in X}\dotp{\pi(x)}{\ell_x}\right\}\right]\right\}\\
\leq~& \frac{1}{\lambda} \log\left\{\sum_{\pi \in \Pi}\E_{\{z\}}\left[ \exp\left\{\lambda\sum_{x\in X} \dotp{\pi(x)}{\ell_x}\right\}\right]\right\}\\
\leq~& \frac{1}{\lambda} \log\left\{\sum_{\pi \in \Pi}\prod_{x\in X}\E\left[ \exp\left\{\lambda\dotp{\pi(x)}{\ell_x}\right\}\right]\right\}\\
=~&\frac{1}{\lambda} \log\left\{\sum_{\pi \in \Pi}\prod_{x\in X}\E\left[ \exp\left\{\lambda\sum_{j: \pi(x)(j)=1}\ell_x(j)\right\}\right]\right\}\\
=~&\frac{1}{\lambda} \log\left\{\sum_{\pi \in \Pi}\prod_{x\in X}\prod_{j:\pi(x)(j)=1}\E\left[ \exp\left\{\lambda\ell_x(j)\right\}\right]\right\}\\
\end{align*} 
For any $j\in [K]$ and $x\in X$, $\ell_x(j)$ is a Laplace$(\epsilon)$ random variable. Hence, the quantity $\E\left[\exp\{\lambda \ell_x(j)\}\right]$ is the moment generating function of the Laplacian distribution evaluated at $\lambda$, which is equal to $\frac{1}{1-\frac{\lambda^2}{\epsilon^2}}$ provided that $\lambda < \epsilon$. Since $\sup_{x,\pi}|\{j\in [K]: \pi(x)(j)\}|\leq m$, we get:
\begin{align*}
\E_{\{z\}}\left[\max_{\pi \in \Pi} \sum_{x\in X}\dotp{\pi(x)}{\ell_x}\right]\leq~&
 \frac{1}{\lambda} \log\left\{N \left(\frac{1}{1-\frac{\lambda^2}{\epsilon^2}}\right)^{dm}\right\}
 ~=~ \frac{1}{\lambda}\log(N) + \frac{dm}{\lambda} \log\left(\frac{1}{1-\frac{\lambda^2}{\epsilon^2}}\right)
 \end{align*}
By simple calculus, it is easy to derive that $\frac{1}{1-x}\leq e^{2x}$ for any $x\leq \frac{1}{4}$.\footnote{
Consider the function $f(x)=(1-x)e^{2x} - 1$. Then $f(0)=0$ and $f'(x) = e^{2x}(1-2x)$, which is $\geq 0$ for $0\leq x\leq 1/2$.} Thus as long as we pick $\lambda \leq \frac{\epsilon}{2}$, we get:
\begin{align*}
\E_{\{z\}}\left[\max_{\pi \in \Pi} \sum_{x\in X} \dotp{\pi(x)}{\ell_x}\right]\leq~& 
\frac{1}{\lambda}\log(N) + \frac{dm}{\lambda} \log\left(\exp\left\{\frac{\lambda^2}{\epsilon^2}\right\}\right)~=~\frac{1}{\lambda}\log(N) + \frac{2dm\lambda}{\epsilon^2} 
 \end{align*}
 Picking $\lambda = \frac{\epsilon}{2\sqrt{dm}}$ and since $N\geq 2$:
\begin{align*}
\E_{\{z\}}\left[\max_{\pi \in \Pi} \sum_{x\in X} \dotp{\pi(x)}{\ell_x}\right]\leq~& \frac{2\sqrt{dm}\log(N)}{\epsilon}+\frac{\sqrt{2dm}}{\epsilon}\leq \frac{5\sqrt{dm}\log(N)}{\epsilon}
 \end{align*}
\end{proof}

\subsection{Bounding Stability: Transductive Setting}

We now turn to bounding the stability in the transductive combinatorial optimization setting.
Combining the following lemma with the error bound in Lemma~\ref{lem:laplace-error-bound-lin} and applying Theorem~\ref{thm:general-regret} proves the first claim of Theorem~\ref{thm:main-complete}.
\begin{lemma}[Transductive Stability] \label{lem:non-l-stability}
For all $t \in [T]$ and for any sequence $\param^{1:t}$ of contexts $x^{1:t}$ and loss functions $f^{1:t}$ with $f^i:\{0,1\}^K\rightarrow \R^K$, the stability of $\contlin(X,\epsilon)$ is upper bounded by:
\begin{align*}
\E_{\{z\}}\left[f^t(\pi^t(x^t))-f^t(\pi^{t+1}(x^t))\right] \le 4\epsilon K\cdot \|f^t\|_{*}^2
\end{align*}
\end{lemma}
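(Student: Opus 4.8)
The plan is to bound the one-step change in loss, $\E_{\{z\}}[f^t(\pi^t(x^t)) - f^t(\pi^{t+1}(x^t))]$, by controlling how much the oracle's output on context $x^t$ can shift when we append the single outcome $\param^t = (x^t, f^t)$ to the sequence. The key observation is that both $\pi^t$ and $\pi^{t+1}$ are outputs of the oracle $M$ on sequences that differ only in the one extra sample $\param^t$; the randomness $\{z\}$ includes, in particular, a Laplace-perturbed linear loss $\ell_{x^t}$ attached to the context $x^t$. So I would first condition on all the coordinates of all the fake loss vectors $\ell_x$ for $x \in X$ \emph{except} the coordinates $\ell_{x^t}(j)$ for $j \in [K]$, and treat those $K$ coordinates as the source of smoothing.

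The main technical device is a change-of-measure / coupling argument exploiting the fact that the action $\pi^t(x^t) \in \A$ enters the oracle's objective only through the inner product $\dotp{a}{\ell_{x^t}}$ (since $z_{x^t}$ contributes a linear term), together with the observation that adding $f^t$ can change the objective of any action by at most $\|f^t\|_*$. The standard FTPL stability recipe is: (i) write $\E_{\{z\}}[f^t(\pi^t(x^t)) - f^t(\pi^{t+1}(x^t))] = \sum_{a, a'} (f^t(a) - f^t(a'))\cdot(\Pr[\pi^t(x^t) = a] - \text{something})$, or more cleanly, bound it by $\|f^t\|_* \cdot \Pr_{\{z\}}[\pi^t(x^t) \neq \pi^{t+1}(x^t)]$ after noting $|f^t(a) - f^t(a')| \le 2\|f^t\|_*$; (ii) bound the probability that the argmin over $\Pi$ of the perturbed objective switches its action on $x^t$ when the objective is perturbed by at most $\|f^t\|_*$ in the $x^t$-block. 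For (ii), I would use the anti-concentration / "probability of ties being close" property of the Laplace distribution: because each $\ell_{x^t}(j)$ has a density bounded by $\epsilon/2$, the probability that the top two competing action-values on context $x^t$ lie within a window of width $2\|f^t\|_*$ is $O(\epsilon \|f^t\|_* \cdot K)$ — the factor $K$ coming from a union bound over the coordinates that can differ between two candidate actions, and $m$ does not appear because changing any single coordinate of $\ell_{x^t}$ shifts the objective of an action by exactly that coordinate's value (not $m$ times it). Multiplying the $\|f^t\|_*$ from step (i) by the $O(\epsilon K \|f^t\|_*)$ from step (ii) gives the claimed $4\epsilon K \|f^t\|_*^2$.

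Concretely, I would proceed as follows. Fix a realization of all fake-sample coordinates except $\{\ell_{x^t}(j)\}_{j\in[K]}$, and also fix $\param^{1:t-1}$. Define $G(a) = \sum_{x \in X, x \ne x^t} \dotp{\pi\text{-image}}{\ell_x} + \Ell(\cdot, \param^{1:t-1})$-type residual objective, so that the oracle on $\{z\}\cup\param^{1:t-1}$ picks the policy minimizing (over $\pi\in\Pi$) a quantity of the form $\dotp{\pi(x^t)}{\ell_{x^t}} + (\text{terms independent of }\ell_{x^t})$, and the oracle on $\{z\}\cup\param^{1:t}$ minimizes the same thing plus $f^t(\pi(x^t))$. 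The action chosen on context $x^t$ therefore depends on $\ell_{x^t}$ only through which action $a\in\A$ achieves the minimum of $\dotp{a}{\ell_{x^t}} + c(a)$ (resp.\ $+ c(a) + f^t(a)$) for appropriate constants $c(a)$. Now I condition on the event that $\pi^t(x^t) = a$; for $\pi^{t+1}(x^t)$ to differ, some other action $a'$ must beat $a$ after the $f^t$ perturbation, which forces $\dotp{a - a'}{\ell_{x^t}}$ to lie in an interval of length at most $2\|f^t\|_*$. Since $\dotp{a-a'}{\ell_{x^t}} = \sum_{j: a(j)\ne a'(j)} \pm\ell_{x^t}(j)$ is a sum of independent Laplace variables, conditioned on all but one of them it is a single Laplace variable shifted by a constant, whose density is $\le \epsilon/2$; hence the probability it falls in a length-$2\|f^t\|_*$ interval is $\le \epsilon\|f^t\|_*$, and a union bound over the (at most $K$) differing coordinates and a careful accounting of the competing actions yields $\Pr[\pi^t(x^t)\ne\pi^{t+1}(x^t)] \le 2\epsilon K\|f^t\|_*$. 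Combining with $|f^t(\pi^t(x^t)) - f^t(\pi^{t+1}(x^t))| \le 2\|f^t\|_*$ closes the argument.

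\textbf{Main obstacle.} The delicate point is step (ii): making the union bound give a clean factor of $K$ (and not, say, $N$ or $Km$) while correctly handling that the "competing actions" $a'$ and their offsets $c(a')$ themselves depend on the other coordinates $\ell_x$ that we conditioned on — one must argue that, after conditioning, the event "$\pi^{t+1}(x^t)$ differs from $\pi^t(x^t)$" is contained in a union over $j\in[K]$ of events each of the form "a single conditioned-Laplace variable lands in a short interval," rather than needing to union over all pairs of actions. This requires exploiting monotonicity/exchange structure: if no single coordinate $\ell_{x^t}(j)$ flips the sign of the relevant pairwise comparison, then the overall argmin cannot change under a perturbation of size $\|f^t\|_*$ spread across at most $m$ active coordinates. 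Getting the constant to come out as exactly $4$ (i.e.\ $2\cdot 2$) rather than something larger is a matter of being slightly careful, and I would not belabor it in the write-up.
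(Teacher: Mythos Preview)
Your proposal is correct and takes essentially the same approach as the paper. The one difference worth noting is that the paper's execution of step (ii) resolves your ``Main obstacle'' cleanly by never reasoning about pairs of actions at all: it writes
\[
\Pr[\pi^t(x^t)\ne\pi^{t+1}(x^t)] \;\le\; \sum_{j\in[K]} \Big(\Pr[j\in\pi^t(x^t),\, j\notin\pi^{t+1}(x^t)] + \Pr[j\notin\pi^t(x^t),\, j\in\pi^{t+1}(x^t)]\Big),
\]
and then, for each fixed $j$, conditions on all noise except the single scalar $\ell_{x^t}(j)$. The point is that once everything else is frozen, membership of $j$ in the optimal action on context $x^t$ is a \emph{threshold} event in $\ell_{x^t}(j)$: partitioning policies into $\{\pi:j\in\pi(x^t)\}$ versus $\{\pi:j\notin\pi(x^t)\}$ and letting $\nu$ be the gap between their respective (conditioned) optima, one has $j\in\pi^t(x^t)$ only if $\ell_{x^t}(j)\le\nu$, and $j\in\pi^{t+1}(x^t)$ whenever $\ell_{x^t}(j)<\nu-2\|f^t\|_*$. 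Hence $\Pr[j\in\pi^t(x^t),\,j\notin\pi^{t+1}(x^t)\mid\cdot]\le\Pr[\ell_{x^t}(j)\in[\nu-2\|f^t\|_*,\nu]]\le\epsilon\|f^t\|_*$, and summing over $j$ and the two directions gives the factor $2K$. This is exactly the decomposition you described in your obstacle paragraph, and it entirely sidesteps any enumeration of competing actions $a'$ or their offsets $c(a')$.
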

\begin{proof}
By the definition of $\|f^t\|_{*}$:
\begin{align*}
\E_{\{z\}}\left[f^t(\pi^t(x^t))-f^t(\pi^{t+1}(x^t))\right] \le~& 2\|f^t\|_{*}\Pr\left[\pi^t(x^t) \neq \pi^{t+1}(x^t)\right]
\end{align*}
Now observe that:
\begin{align*}
\Pr\left[\pi^t(x^t) \neq \pi^{t+1}(x^t)\right]\leq~& \sum_{j\in K} \left(\Pr[j\in \pi^t(x^t), j\notin \pi^{t+1}(x^t)]+\Pr[j\notin \pi^t(x^t), j\in \pi^t(x^t)]\right)
\end{align*}

We bound  the probability $\Pr[j\in \pi^t(x^t), j\notin \pi^{t+1}(x^t)]$. We condition on all random variables of $\{z\}$ except for the random variable $\ell_{x^t}(j)$, i.e. the random loss placed at coordinate $j$ on the sample associated with context $x^t$. Denote the event corresponding to an assignment of all these other random variables as ${\cal E}_{-x^tj}$. Let $\ell_{x^tj}$ denote a loss vector which is $\ell_{x^t}(j)$ on the $j$-th coordinate and zero otherwise. Also let:
\begin{equation}
\Phi(\pi) = \sum_{\tau=1}^{t-1} f^\tau(\pi(x^\tau)) + \sum_{x\in X-\{x^t\}} \dotp{\pi(x)}{\ell_x}+\dotp{\pi(x^t)}{\ell_{x^t}-\ell_{x^tj}}
\end{equation}
Let $\pi^* = \argmin_{\pi\in \Pi:j\in \pi(x^t)} \Phi(\pi)$ and $\tilde{\pi} = \min_{\pi\in \Pi: j\notin \pi(x^t)}  \Phi(\pi)$. The event that $\{j\in \pi^t(x^t)\}$ happens only if:
\begin{equation}
\Phi(\pi^*)+\ell_{x^t}(j) \leq \Phi(\tilde{\pi})
\end{equation}
Let  and $\nu = \Phi(\tilde{\pi})-\Phi(\pi^*)$. Thus $j\in\pi^t(x^t)$ only if:
\begin{equation}
\ell_{x^t}(j) \leq \nu
\end{equation}
Now if:
\begin{equation}
\ell_{x^t}(j) < \nu-2\|f^t\|_{*}
\end{equation}
then it is easy to see that $\{j\in \pi^{t+1}(x^t)\}$, since an extra loss of $f^t(a)\in [0,1]$ cannot push $j$ out of the optimal solution. More elaborately, for any other policy $\pi\in \Pi$, such that $j\notin \pi(x^t)$, the loss of $\pi^*$ including time-step $t$ is bounded as:
\begin{align*}
\Phi(\pi^*)+ \ell_{x^t}(j)+f^t(\pi^*(x^t))<~& \Phi(\pi)-2\|f^t\|_*+f^t(\pi^*(x^t))\\
<~& \Phi(\pi)-\|f^t\|_*\\
<~& \Phi(\pi) +f^t(\pi(x^t))
\end{align*}
Thus any policy $\pi$, such that $j\notin \pi(x^t)$ is suboptimal after seeing the loss at time-step $t$. Thus
\begin{align*}
\Pr[j\in \pi^t(x^t), j\notin \pi^{t+1}(x^t)~|~{\cal E}_{-x^tj}]\leq \Pr[\ell_{x^t}(j) \in [\nu-2\|f^t\|_*,\nu]~|~{\cal E}_{-x^tj}]
\end{align*}
Since all other random variables are independent of $\ell_{x^t}(j)$ and $\ell_{x^t}(j)$ is a Laplacian with 
parameter $\epsilon$:
\begin{align*}
\Pr[\ell_{x^t}(j) \in [\nu-2\|f^t\|_*,\nu]~|~{\cal E}_{-x^tj}]=~&\Pr[\ell_{x^t}(j) \in [\nu-2\|f^t\|_*,\nu]]\\ =~&\frac{\epsilon}{2}\int_{\nu-2\|f^t\|_*}^{\nu}e^{-\epsilon|z|}dz\leq \frac{\epsilon}{2}\int_{\nu-2\|f^t\|_*}^{\nu}dz \leq \epsilon\|f^t\|_*
\end{align*}
Similarly it follows that that: $\Pr[j\notin \pi^t(x^t) \text{ and } j\in \pi^{t+1}(x^t)]\leq\epsilon\|f^t\|_*$. 
To sum we get that:
\begin{align*}
\E_{\{z\}}\left[f^t(\pi^t(x^t))-f^t(\pi^{t+1}(x^t))\right] \le~& 2\|f^t\|_*\Pr\left[\pi^t(x^t) \neq \pi^{t+1}(x^t)\right]\leq 4\epsilon K \|f^t\|_*^2
\end{align*}
\end{proof}

\subsection{Bounding Stability: Transductive Setting with Linear Losses}

In the transductive setting with linear losses, we provide a significantly more refined stability bound, which enables applications to partial information or bandit settings. 
As before, combining this stability bound with the error bound in Lemma~\ref{lem:laplace-error-bound-lin} and applying Theorem~\ref{thm:general-regret} gives the second claim of Theorem~\ref{thm:main-complete}.

\begin{lemma}[Multiplicative Stability]
\label{lem:multiplicative_stability-lin}
For any sequence $\param^{1:T}$ for all $t \in [T]$ of contexts and non-negative linear loss functions, the stability of $\contlin(X,\epsilon)$ in the transductive setting, is upper bounded by:
\begin{align*}
\E_{\{z\}}\left[\dotp{\pi^t(x^t)}{\ell^t}-\dotp{\pi^{t+1}(x^t)}{\ell^t}\right] \le \epsilon\cdot \E\left[\dotp{\pi^t(x^t)}{\ell^t}^2\right]
\end{align*}
\end{lemma}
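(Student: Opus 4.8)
The plan is to mirror the structure of the proof of Lemma~\ref{lem:non-l-stability}, but to exploit the linearity of the loss so that the ``cost'' of a coordinate $j$ flipping in or out of the optimal solution is controlled by $\ell^t(j)$ itself, rather than by the crude bound $\|f^t\|_*$. First I would write
\[
\E_{\{z\}}\left[\dotp{\pi^t(x^t)}{\ell^t}-\dotp{\pi^{t+1}(x^t)}{\ell^t}\right] \le \sum_{j\in[K]} \ell^t(j)\cdot\Pr\left[j\in\pi^t(x^t),\, j\notin\pi^{t+1}(x^t)\right],
\]
since the loss is nonnegative and linear: the only way $\dotp{\pi^t(x^t)}{\ell^t}$ can exceed $\dotp{\pi^{t+1}(x^t)}{\ell^t}$ is through coordinates that were present in $\pi^t(x^t)$ but dropped in $\pi^{t+1}(x^t)$, and each such coordinate contributes exactly $\ell^t(j)$.

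Next I would bound $\Pr[j\in\pi^t(x^t),\, j\notin\pi^{t+1}(x^t)]$ by conditioning on all the noise variables except $\ell_{x^t}(j)$, exactly as in Lemma~\ref{lem:non-l-stability}. Defining the potential $\Phi(\pi)$ that excludes the contribution of $\ell_{x^t}(j)$ and setting $\nu = \Phi(\tilde\pi)-\Phi(\pi^*)$ (the gap between the best policy avoiding $j$ and the best policy using $j$ at context $x^t$), the event $\{j\in\pi^t(x^t)\}$ requires $\ell_{x^t}(j)\le\nu$, and the event $\{j\notin\pi^{t+1}(x^t)\}$ requires that adding the single loss $f^t$ at round $t$ push $j$ out — but now the marginal cost of keeping $j$ only increases by $\ell^t(j)$ (not by $\|f^t\|_*$), since the loss is linear and nonnegative and dropping $j$ can only help by at most $\ell^t(j)$ at coordinate $j$ while the rest of the loss at time $t$ is shared. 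Hence $\{j\in\pi^t(x^t),\, j\notin\pi^{t+1}(x^t)\}$ forces $\ell_{x^t}(j)\in[\nu-\ell^t(j),\nu]$ (up to constants in the interval width), and since $\ell_{x^t}(j)$ is Laplace$(\epsilon)$ and independent of the conditioning, its density is bounded by $\epsilon/2$, giving $\Pr[j\in\pi^t(x^t),\,j\notin\pi^{t+1}(x^t)]\le \tfrac{\epsilon}{2}\cdot\ell^t(j)$ — or a small constant multiple thereof.

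Substituting back yields $\sum_j \ell^t(j)\cdot\tfrac{\epsilon}{2}\ell^t(j) = \tfrac{\epsilon}{2}\sum_j \ell^t(j)^2$, and I would then relate $\sum_j\ell^t(j)^2$ to $\dotp{\pi^t(x^t)}{\ell^t}^2$ — this is where the argument must be done with care, since naively $\sum_j\ell^t(j)^2$ involves all coordinates, not just those selected by $\pi^t(x^t)$. The resolution is to notice that coordinates $j\notin\pi^t(x^t)$ do not contribute to the left-hand side at all (a dropped coordinate must first be present), so the sum is really over $j\in\pi^t(x^t)$, and then $\sum_{j\in\pi^t(x^t)}\ell^t(j)^2 \le \big(\sum_{j\in\pi^t(x^t)}\ell^t(j)\big)^2 = \dotp{\pi^t(x^t)}{\ell^t}^2$ by nonnegativity, after taking expectations over $\{z\}$. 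I expect the main obstacle to be exactly this last step: making rigorous the claim that the relevant coordinates are precisely those selected by $\pi^t(x^t)$ (so that the per-coordinate interval bound and the final squaring are both justified), and keeping track of the constant in the interval width $[\nu-\ell^t(j),\nu]$ versus $[\nu-2\ell^t(j),\nu]$ to land exactly at the stated constant $\epsilon$ rather than a larger multiple.
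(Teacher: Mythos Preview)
Your approach has a genuine gap at the crucial step where you claim the interval width is $\ell^t(j)$ rather than $2\|f^t\|_*$. The assertion that ``the marginal cost of keeping $j$ only increases by $\ell^t(j)$'' is false in general: when you add the full linear loss $\dotp{\cdot}{\ell^t}$, the gap between the best policy containing $j$ at $x^t$ and the best policy avoiding $j$ can shrink by much more than $\ell^t(j)$, because the best-with-$j$ policy may also include \emph{other} coordinates with large $\ell^t$-values that the best-without-$j$ policy does not. Concretely, take two policies $\pi_1,\pi_2$ with $\pi_1(x^t)=\{1,2\}$, $\pi_2(x^t)=\emptyset$, and history such that (conditioning on all noise except $\ell_{x^t}(1)$) $\Phi(\pi_1)=0$, $\Phi(\pi_2)=10$. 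Then $\nu=10$. With $\ell^t=(1,5)$ we get $A'=6$, $B'=10$, so the event $\{1\in\pi^t(x^t),\,1\notin\pi^{t+1}(x^t)\}$ corresponds to $\ell_{x^t}(1)\in(4,10]$, an interval of width $6$, not $\ell^t(1)=1$. So your per-coordinate probability bound does not hold, and the argument collapses before the squaring step.

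The paper takes a completely different route: it does not decompose coordinate-by-coordinate at all. Instead it uses a change-of-measure argument on the Laplace noise. For each policy $\pi$ it shifts the noise vector by the sparse increment $\phi_\pi^t-\phi^{t-1}$ (which is $\ell^t$ restricted to the coordinates $\pi$ selects at $x^t$); the Laplace density ratio under this shift is bounded by $\exp\{\epsilon\,\|\phi_\pi^t-\phi^{t-1}\|_1\}=\exp\{\epsilon\,\dotp{\pi(x^t)}{\ell^t}\}$. This yields $p^t(\pi)\le e^{\epsilon\dotp{\pi(x^t)}{\ell^t}}\,p_\pi^{t+1}(\pi)$, and then a monotonicity argument using nonnegativity of $\ell^t$ gives $p_\pi^{t+1}(\pi)\le p^{t+1}(\pi)$. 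The bound then follows directly from $e^{-x}\ge 1-x$ and summing $\dotp{\pi(x^t)}{\ell^t}\,p^t(\pi)$ over $\pi$. The essential idea you are missing is that the right ``unit of perturbation'' is the whole vector $\ell^t$ restricted to $\pi(x^t)$, handled via a density ratio, rather than one coordinate at a time via an interval argument.
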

\begin{proof}
To prove the result we first must introduce some additional terminology. 
For a sequence of parameters $\param^{1:t}$, let $\phi^t \in \R^{dK}$ be a vector with $\phi_{x,j}^t = \sum_{\tau \leq t: x^\tau = x} \ell^\tau(j)$.
The component of this vector corresponding to context $x \in X$ and coordinate $j \in [K]$ is the cumulative loss associated with that coordinate on the subset of time points when context $x$ appeared. 
Note that this vector $\phi^t$ is a sufficient statistic, since for any fixed policy $\pi$:
\begin{align}
\sum_{\tau=1}^t \ell(\pi,\param^\tau) = \sum_{x\in X}\sum_{\tau\leq t: x^\tau=x} \dotp{\pi(x)}{\ell^\tau} = \sum_{x\in X}\dotp{\pi(x)}{\phi_{x}^t}
\end{align}
where $\phi_{x}^t = \sum_{\tau\leq t: x^\tau=x} \ell^\tau$. 

We denote with $z\in \R^{dK}$ the sufficient statistic that corresponds to the fake sample sequence $\{z\}$ and with $\phi^t$ the sufficient statistics for the parameter sequence $\param^{1:t}$.
Observe that the sufficient statistic for the augmented sequence $\{z\}\cup\param^{1:t}$ is simply $z+\phi^t$.
For any sequence of parameters $\param^{1:T}$ we will be denoting with $\phi^{1:T}$ the sequence of $d\cdot K$ dimensional cumulative loss vectors.
We will also overload notation and denote with $M(\phi^t)=M(\param^{1:t})$ the best policy on a sequence $\param^{1:t}$ with statistics $\phi^t$.

Consider a specific sequence $\param^{1:T}$ and a specific time step $t$.
Define, for each $\pi \in \Pi$, a sparse tuple $\param_\pi^t = (x^t, \ell^t_\pi)$ where $\ell^t_\pi(j) =  \ell^t(j)$ if $\pi(x^t)(j) = 1$ and zero otherwise, i.e. we zero out coordinates of the true loss vector that were not picked by the policy $\pi$. 
Moreover, define with $\phi_\pi^t$ the sufficient statistic of the sequence $\phi(\param^{1:t-1} \cup \param_\pi^t)$ for each $\pi$. 
We define $1+|\Pi|$ distributions over $|\Pi|$, via their probability density functions, as follows:
\begin{align*}
p^{t}(\pi) ~=~& \Pr[ M(z + \phi^{t-1})=\pi]\\
\forall \pi^*\in \Pi: p^{t+1}_{\pi^*}(\pi) ~=~& \Pr\left[M(z + \phi_{\pi^*}^t)=\pi\right]
\end{align*}
At the end of this proof, we will show that $p^{t+1}_\pi(\pi) \le p^{t+1}(\pi)$. 
Moreover, we denote for convenience:
\begin{align*}
\FTPL^t =~&  \E_z[\dotp{\pi^t(x^t)}{\ell^t}]~=~\E_{\pi \sim p^{t}}\left[ \dotp{\pi(x^t)}{\ell^t}\right] \\
\BTPL^t =~& \E_z[\dotp{\pi^{t+1}(x^t)}{\ell^t}]~=~\E_{\pi \sim p^{t+1}}\left[ \dotp{\pi(x^t)}{\ell^t}\right]
\end{align*}
We will construct a mapping $\mu_\pi: \R^{dK} \rightarrow \R^{dK}$ such that for any $z \in \R^{dK}$,
\begin{align*}
M(z + \phi_{\pi}^{t}) = M(\mu_\pi(z) + \phi^{t-1})
\end{align*}
Notice that $\mu_\pi(z) = z + \phi_\pi^t - \phi^{t-1}$.
Now,
\begin{align*}
p^{t}(\pi) &= \int_z \mathbf{1}[\pi = M(z + \phi^{t-1})] f(z)dz\\
& = \int_z \mathbf{1}[\pi = M(\mu_\pi(z) + \phi^{t-1})] f(\mu_\pi(z))dz\\
& = \int_z \mathbf{1}[\pi = M(z + \phi_\pi^t)] f(\mu_\pi(z))dz
\end{align*}
Now observe that for any $z\in \R^{dK}$:
\begin{align*}
f(\mu_\pi(z)) =~& \exp\{- \epsilon\left( \|z+\phi_\pi^{t} - \phi^{t-1}\|_1-\|z\|_1\right)\}f(z)\\
\leq~&\exp\{- \epsilon\left( \|z+\phi_\pi^{t} - \phi^{t-1}\|_1-\|z+\phi_{\pi}^t-\phi^{t-1}\|_1-\|\phi^{t-1}-\phi_{\pi}^t\|_1\right)\}f(z)\\
\leq~&  \exp\{\epsilon\|\phi_\pi^{t} - \phi^{t-1}\|_1\}f(z)\\
=~&\exp\{\epsilon \dotp{\pi(x^t)}{\ell^t}\} f(z)
\end{align*}
Substituting in this bound, we have,
\begin{align*}
p^{t}(\pi) \le \exp\{\epsilon \dotp{\pi(x^t)}{\ell^t}\}\cdot p_{\pi}^{t+1}(\pi) \leq \exp\{\epsilon \dotp{\pi(x^t)}{\ell^t}\}\cdot p^{t+1}(\pi)
\end{align*}
Re-arranging and lower bounding $\exp\{-x\} \geq (1-x)$:
\begin{equation}
p^{t+1}(\pi) \ge \exp\{-\epsilon \dotp{\pi(x^t)}{\ell^t}\}\cdot p^{t}(\pi) \ge (1-\epsilon \dotp{\pi(x^t)}{\ell^t})\cdot p^{t}(\pi)
\end{equation}
Using the definition of $\FTPL^t$ and $\BTPL^t$, this gives,
\begin{align*}
\BTPL^t &= \sum_\pi p^{t+1}(\pi)\dotp{\pi(x^t)}{\ell^t}
\ge \sum_\pi (1-\epsilon \dotp{\pi(x^t)}{\ell^t}) p^{t}(\pi)\dotp{\pi(x^t)}{\ell^t}\\
&= \FTPL^t - \epsilon \sum_\pi  p^{t}(\pi)\dotp{\pi(x^t)}{\ell^t}^2\\
&=\FTPL^t - \epsilon \E\left[\dotp{\pi(x^t)}{\ell^t}^2\right]
\end{align*}

We will finish the proof by showing that $p_\pi^{t+1}(\pi) \le p^{t+1}(\pi)$ for all $\pi \in \Pi$.
For succinctness we drop the dependence on $t$. 
Notice that for any other policy $\pi' \ne \pi$
\begin{align*}
\Ell(\pi, z+\phi_\pi^t) \leq \Ell(\pi', z+\phi_{\pi}^t) \Rightarrow \Ell(\pi, z+\phi^{t}) \leq \Ell(\pi', z+\phi^t).
\end{align*}
And similarly for strict inequalities. This follows since the loss of $\pi$ remains unchanged, but the loss of $\pi'$ can only go up, since $\ell^t_\pi(j) \le \ell^t(j)$ (as losses are non-negative). For simplicity assume that $\pi$ always wins in case of ties, though the argument goes through if we assume a deterministic tie-breaking rule based on some global ordering of policies.  
Thus,
\begin{align*}
p^{t+1}(\pi) = \PP\left[\bigcap_{\pi'} \Ell(\pi, z+\phi^t) \leq \Ell(\pi', z+\phi^t)\right] \le \PP\left[\bigcap_{\pi'} \Ell(\pi, z+\phi^t_\pi) \leq \Ell(\pi', z+\phi^t_\pi)\right] = p^{t+1}_\pi(\pi)
\end{align*}
as claimed. 
\end{proof}

\subsection{Bounding Stability: Small Separator Setting}

Finally, we prove the third claim in Theorem~\ref{thm:main-complete}.
This involves a new stability bound for the small separator setting.

\begin{lemma}[Stability for small separator] \label{lem:small-set-stability}
For any $t \in [T]$ and any sequence $\param^{1:t}$ of contexts $x^{1:t}$ and losses $f^{1:t}$ with $f^i:\{0,1\}^K\rightarrow \R^K$, the stability of $\contlin(\epsilon)$, when $X$ is a separator, is upper bounded by:
\begin{align*}
\E_{\{z\}}\left[f^t(\pi^t(x^t))-f^t(\pi^{t+1}(x^t))\right] \le 4\epsilon K d\cdot \|f^t\|_{*}^2
\end{align*}
\end{lemma}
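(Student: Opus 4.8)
The plan is to follow the same template as the transductive stability proof (Lemma~\ref{lem:non-l-stability}), but to account for the fact that in the small-separator setting the context $x^t$ arriving at round $t$ need not belong to the set $X$ on which the perturbation is placed. First I would bound
\[
\E_{\{z\}}\left[f^t(\pi^t(x^t))-f^t(\pi^{t+1}(x^t))\right] \le 2\|f^t\|_{*}\Pr\left[\pi^t \neq \pi^{t+1}\right],
\]
noting the crucial switch: since $x^t$ is not necessarily in $X$, we cannot argue at the level of a single coordinate $j$ of $\pi(x^t)$; instead we bound the probability that the \emph{whole policy} changes. Because $X$ is a separator, a policy $\pi\in\Pi$ is uniquely determined by the tuple $(\pi(x))_{x\in X}\in \A^{d}$, so $\pi^t\neq\pi^{t+1}$ forces $\pi^t(x)\neq\pi^{t+1}(x)$ for some $x\in X$, and hence some coordinate $j\in[K]$ on some $x\in X$ must flip between the two oracle calls. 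A union bound over the $dK$ pairs $(x,j)$ gives
\[
\Pr\left[\pi^t \neq \pi^{t+1}\right] \le \sum_{x\in X}\sum_{j\in[K]}\Big(\Pr[j\in\pi^t(x),\,j\notin\pi^{t+1}(x)] + \Pr[j\notin\pi^t(x),\,j\in\pi^{t+1}(x)]\Big).
\]

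Next I would bound each term $\Pr[j\in\pi^t(x),\,j\notin\pi^{t+1}(x)]$ exactly as in Lemma~\ref{lem:non-l-stability}: condition on all randomness in $\{z\}$ except the single Laplace variable $\ell_x(j)$, call the resulting potential $\Phi(\pi)$ (the cumulative loss of $\pi$ over $\param^{1:t-1}$ plus the $\{z\}$-perturbation with the $(x,j)$ entry removed), let $\pi^\star$ and $\tilde\pi$ be the best policies with $j\in\pi(x)$ and $j\notin\pi(x)$ respectively, and set $\nu=\Phi(\tilde\pi)-\Phi(\pi^\star)$. Then $j\in\pi^t(x)$ only if $\ell_x(j)\le\nu$, and if $\ell_x(j) < \nu - 2\|f^t\|_*$ the extra round-$t$ loss (which changes every policy's cumulative loss by at most $\|f^t\|_*$ in absolute value, hence changes any gap by at most $2\|f^t\|_*$) cannot move $j$ out of the winning policy's action on $x$, so $j\in\pi^{t+1}(x)$. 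Therefore the event in question forces $\ell_x(j)\in[\nu-2\|f^t\|_*,\nu]$, which by the Laplace density bound $\tfrac{\epsilon}{2}e^{-\epsilon|z|}\le\tfrac{\epsilon}{2}$ has probability at most $\epsilon\|f^t\|_*$, uniformly over the conditioning; integrating out gives the same unconditional bound. The symmetric term is bounded identically.

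Combining, $\Pr[\pi^t\neq\pi^{t+1}]\le 2dK\epsilon\|f^t\|_*$, and multiplying by $2\|f^t\|_*$ yields the claimed $4\epsilon Kd\cdot\|f^t\|_*^2$. The main subtlety — and the one place the argument genuinely differs from the transductive case — is the very first reduction: in the transductive case $x^t\in X$, so a change of $\pi^t(x^t)$ is itself localized to the coordinates of context $x^t$ in the perturbation, whereas here $x^t\notin X$ in general, so we must instead use the separator property to pay a factor of $d$ for covering all contexts where the policy \emph{could} have been pinned down. I should also be slightly careful that the ``$j\in\pi^t(x),j\notin\pi^{t+1}(x)$'' analysis uses only that the round-$t$ loss function perturbs cumulative losses boundedly and that the $(x,j)$-coordinate of the perturbation enters $\Phi$ linearly, which holds for any $x\in X$ regardless of whether $x=x^t$; the consistent deterministic tie-breaking of the oracle handles the measure-zero boundary cases. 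No essential obstacle beyond this bookkeeping is expected.
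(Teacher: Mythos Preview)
Your proposal is correct and mirrors the paper's proof essentially step for step: bound by $2\|f^t\|_*\Pr[\pi^t\neq\pi^{t+1}]$, use the separator property to union-bound over $(x,j)\in X\times[K]$, and then run the same Laplace conditioning argument on each coordinate to get $\epsilon\|f^t\|_*$ per term. The paper inserts the intermediate inequality $\Pr[\pi^t(x^t)\neq\pi^{t+1}(x^t)]\le\Pr[\pi^t\neq\pi^{t+1}]$ explicitly, but otherwise the arguments are identical.
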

\begin{proof}
By the definition of $\|f^t\|_{*}$:
\begin{align*}
\E_{\{z\}}\left[f^t(\pi^t(x^t))-f^t(\pi^{t+1}(x^t))\right] \le~& 2\|f^t\|_{*}\Pr\left[\pi^t(x^t) \neq \pi^{t+1}(x^t)\right]\leq 2\|f^t\|_{*}\Pr[\pi^t\neq \pi^{t+1}]
\end{align*}
Since $X$ is a separator, $\pi^t\neq \pi^{t+1}$ if and only if there exists a context $x\in X$, such that $\pi^t(x)\neq \pi^{t+1}(x)$. Otherwise the two policies are identical. Thus we have by two applications of the union bound:
\begin{align*}
\Pr[\pi^t\neq \pi^{t+1}]\leq~& \sum_{x\in X} \Pr[\pi^t(x)\neq \pi^{t+1}(x)]\\
\leq~& \sum_{x\in X} \sum_{j\in K}\left( \Pr[j\in \pi^t(x), j\notin \pi^{t+1}(x)] + \Pr[j\notin \pi^t(x), j\in \pi^{t+1}(x)]\right)
\end{align*}

We bound  the probability $\Pr[j\in \pi^t(x), j\notin \pi^{t+1}(x)]$. We condition on all random variables of $\{z\}$ except for the random variable $\ell_{x}(j)$, i.e. the random loss placed at coordinate $j$ on the sample associated with context $x$. Denote the event corresponding to an assignment of all these other random variables as ${\cal E}_{-xj}$. Let $\ell_{xj}$ denote a loss vector which is $\ell_{x}(j)$ on the $j$-th coordinate and zero otherwise. Also let:
\begin{equation}
\Phi(\pi) = \sum_{\tau=1}^{t-1} f^\tau(\pi(x^\tau)) + \sum_{x'\neq x} \dotp{\pi(x')}{\ell_{x'}}+\dotp{\pi(x)}{\ell_{x}-\ell_{xj}}
\end{equation}
Let $\pi^* = \argmin_{\pi\in \Pi:j\in \pi(x)} \Phi(\pi)$ and $\tilde{\pi} = \min_{\pi\in \Pi: j\notin \pi(x)}  \Phi(\pi)$. The event that $\{j\in \pi^t(x)\}$ happens only if:
\begin{equation}
\Phi(\pi^*)+\ell_{x}(j) \leq \Phi(\tilde{\pi})
\end{equation}
Let  and $\nu = \Phi(\tilde{\pi})-\Phi(\pi^*)$. Thus $j\in\pi^t(x)$ only if:
\begin{equation}
\ell_{x}(j) \leq \nu
\end{equation}
Now if:
\begin{equation}
\ell_{x}(j) < \nu-2\|f^t\|_{*}
\end{equation}
then it is easy to see that $\{j\in \pi^{t+1}(x)\}$, since an extra loss of $f^t(a)\leq \|f^t\|_{*}$ cannot push $j$ out of the optimal solution. More elaborately, for any other policy $\pi\in \Pi$, such that $j\notin \pi(x)$, the loss of $\pi^*$ including time-step $t$ is bounded as:
\begin{align*}
\Phi(\pi^*)+ \ell_{x}(j)+f^t(\pi^*(x^t))<~& \Phi(\pi)-2\|f^t\|_{*}+f^t(\pi^*(x^t))\\
<~& \Phi(\pi)-\|f^t\|_*\\
<~& \Phi(\pi) +f^t(\pi(x^t))
\end{align*}
Thus any policy $\pi$, such that $j\notin \pi(x)$ is suboptimal after seeing the loss at time-step $t$. Thus
\begin{align*}
\Pr[j\in \pi^t(x), j\notin \pi^{t+1}(x)~|~{\cal E}_{-xj}]\leq \Pr[\ell_{x}(j) \in [\nu-2\|f^t\|_{*},\nu]~|~{\cal E}_{-xj}]
\end{align*}
Since all other random variables are independent of $\ell_{x}(j)$ and $\ell_{x}(j)$ is a Laplacian with 
parameter $\epsilon$:
\begin{align*}
\Pr[\ell_{x}(j) \in [\nu-2\|f^t\|_{*},\nu]~|~{\cal E}_{-xj}]=~&\Pr[\ell_{x}(j) \in [\nu-2\|f^t\|_{*},\nu]]\\ =~&\frac{\epsilon}{2}\int_{\nu-2\|f^t\|_{*}}^{\nu}e^{-\epsilon|z|}dz\leq \frac{\epsilon}{2}\int_{\nu-2\|f^t\|_{*}}^{\nu}dz \leq \epsilon\|f^t\|_{*}
\end{align*}
Similarly it follows that that: $\Pr[j\notin \pi^t(x), j\in \pi^{t+1}(x)]\leq \epsilon\|f^t\|_{*}$. 
To sum we get that:
\begin{align*}
\E_{\{z\}}\left[f^t(\pi^t(x^t))-f^t(\pi^{t+1}(x^t))\right] \le~& 2\|f^t\|_{*}\Pr\left[\pi^t \neq \pi^{t+1}\right]\leq 4\epsilon Kd\cdot \|f^t\|_{*}^2
\end{align*}
\end{proof}

\section{Omitted Proofs from Section~\ref{sec:contextual-bandits}}

\subsection{Proof of Theorem \ref{thm:semi-bandit-regret}: Transductive Setting}

Consider the expected loss of the bandit algorithm at time-step $t$, conditional on $\F^{t-1}$:
\begin{equation}
\E[\dotp{\pi^t(x^t)}{\ell^t}~|~\F^{t-1}]=\sum_{j=1}^{K} q^t(j)\cdot \ell^t(j) \leq \sum_{j=1}^{K} q^t(j)\cdot\E\left[\hat{\ell}^t(j)~|~\F^{t-1}\right] + \sum_{j=1}^{K} \ell^t(j) q^t(j)\cdot(1-q^t(j))^L
\end{equation}
As was observed by \cite{neu2013efficient}, the second quantity can be upper bounded by $\frac{K}{e L}\|\ell^t\|_{*}$, since $q(1-q)^L\leq qe^{-Lq}\leq \frac{1}{eL}$.

Now observe that: $\sum_{j\in K} q^t(j)\cdot \E\left[\hat{\ell}^t(j)~|~\F^{t-1}\right]$ is the expected loss of the full feedback algorithm on the sequence of losses it observed and conditional on the history of play. By the regret bound of $\contlin(X,\epsilon)$, given in case $2$ of Theorem \ref{thm:main-complete}, we have that for any policy $\pi^*$:
\begin{align*}
\E\left[\sum_{t=1}^T \sum_{j=1}^{K} q^t(j)\cdot \hat{\ell}^t(j)\right] \leq~& \E\left[\sum_{t=1}^T \dotp{\pi^*(x^t)}{\hat{\ell}^t}\right] +  \epsilon \E\left[\sum_{t=1}^T\sum_{\pi\in \Pi}p^t(\pi)\dotp{\pi(x^t)}{\hat{\ell}^t}^2\right]+\frac{10}{\epsilon}\sqrt{dm}\log(N)
\end{align*}
Using the fact that expected estimates $\hat{\ell}$ are upper bounded by true losses:
\begin{align*}
\E\left[\sum_{t=1}^T \sum_{j=1}^{K} q^t(j)\hat{\ell}^t(j)\right] \leq~&\min_{\pi^*\in \Pi}\E\left[\sum_{t=1}^T  \dotp{\pi^*(x^t)}{\hat{\ell}^t}\right] + \epsilon \E\left[\sum_{t=1}^T\sum_{\pi\in \Pi}p^t(\pi)\dotp{\pi(x^t)}{\hat{\ell}^t}^2\right]+\frac{10}{\epsilon}\sqrt{dm}\log(N)
\end{align*}
Combining the two upper bounds, we get that the expected regret of the bandit algorithm is upper bounded by:
\begin{align*}
\regret \leq~& \epsilon \E\left[\sum_{t=1}^T\sum_{\pi\in \Pi}p^t(\pi)\dotp{\pi(x^t)}{\hat{\ell}^t}^2\right]+\frac{10}{\epsilon}\sqrt{dm}\log(N)+\frac{K}{eL}\sum_{t=1}^T\E\left[\|\ell^t\|_{*}\right]
\end{align*}
Now observe that, by a simple norm inequality and re-grouping:
\begin{align*}
\sum_{\pi\in \Pi} p^t(\pi)\dotp{\pi(x^t)}{\hat{\ell}^t}^2 = \sum_{\pi\in \Pi}p^t(\pi) \left(\sum_{j\in \pi(x^t)}\hat{\ell}^t(j)\right)^2\leq m\sum_{\pi\in \Pi}p^t(\pi) \sum_{j\in \pi(x^t)}\hat{\ell}^t(j)^2 = m\sum_{j\in [K]}q^t(j) \hat{\ell}^t(j)^2
\end{align*}
Thus we get:
\begin{align*}
\regret \leq~& \epsilon m\sum_{t=1}^T\E\left[\sum_{j\in [K]}q^t(j)\hat{\ell}^t(j)^2\right]+\frac{10}{\epsilon}\sqrt{dm}\log(N)+\frac{K}{eL}\sum_{t=1}^T\E\left[\|\ell^t\|_{*}\right]
\end{align*}
Now we bound each of the terms in the first summation, conditional on any history of play:
\begin{align*}
\sum_{j\in [K]}q^t(j)\E\left[\hat{\ell}^t(j)^2~|~\F^{t-1}\right] =~&\sum_{j\in [K]} q^t(j) q^t(j)\ell^t(j)^2\E\left[J^t(j)^2~|~\F^{t-1},j\in \pi^t(x^t)\right]
\end{align*}
Each $J^t(j)$ conditional on $\F^{t-1}$ and $j\in \pi^t(x^t)$ is distributed according to a geometric distribution with mean $q^t(j)$ truncated at $L$. Hence, it is stochastically dominated by a geometric distribution with mean $q^t(j)$. By known properties, if $X$ is a geometrically distributed random variable with mean $q$, then $\E[X^2]=Var(X)+(\E[X])^2 = \frac{1-q}{q^2}+\frac{1}{q^2} =\frac{2-q}{q^2}\leq \frac{2}{q^2}$. Thus we have:
\begin{align*}
\sum_{j\in [K]}q^t(j)\E\left[\hat{\ell}^t(j)^2~|~\F^{t-1}\right] \leq~&
\sum_{j\in [K]} q^t(j)^2 \ell^t(j)^2\frac{2}{q^t(j)^2}=2\sum_{j=1}^{K} \ell^t(j)^2 \leq 2K\|\ell^t\|_\infty^2
\end{align*}
Combining all the above we get the theorem.

\subsection{Proof of Theorem \ref{thm:semi-bandit-regret}: Small Separator Setting}

Consider the expected loss of the bandit algorithm at time-step $t$, conditional on $\F^{t-1}$:
\begin{equation}
\E[\dotp{\pi^t(x^t)}{\ell^t}~|~\F^{t-1}]=\sum_{j=1}^{K} q^t(j)\cdot \ell^t(j) \leq \sum_{j=1}^{K} q^t(j)\cdot\E\left[\hat{\ell}^t(j)~|~\F^{t-1}\right] + \sum_{j=1}^{K} \ell^t(j) q^t(j)\cdot(1-q^t(j))^L
\end{equation}
As was observed by \cite{neu2013efficient}, the second quantity can be upper bounded by $\frac{K}{e L}\|\ell^t\|_{*}$, since $q(1-q)^L\leq qe^{-Lq}\leq \frac{1}{eL}$.

Now observe that: $\sum_{j\in K} q^t(j)\cdot \E\left[\hat{\ell}^t(j)~|~\F^{t-1}\right]$ is the expected loss of the full feedback algorithm on the sequence of losses it observed and conditional on the history of play. By the regret bound of $\contlin(X,\epsilon)$, given in case $3$ of Theorem \ref{thm:main-complete}, we have that for any policy $\pi^*$:
\begin{align*}
\E\left[\sum_{t=1}^T \sum_{j=1}^{K} q^t(j)\cdot \hat{\ell}^t(j)\right] \leq~& \E\left[\sum_{t=1}^T \dotp{\pi^*(x^t)}{\hat{\ell}^t}\right] +  4\epsilon K d \cdot \sum_{t=1}^T \E\left[\|\hat{f}^t\|_{*}^2\right]+\frac{10}{\epsilon}\sqrt{dm}\log(N)\\
\leq~& \sum_{t=1}^T \dotp{\pi^*(x^t)}{\hat{\ell}^t} +  4\epsilon K d \cdot \sum_{t=1}^T \E\left[\|\hat{\ell}^t\|_{1}^2\right]+\frac{10}{\epsilon}\sqrt{dm}\log(N)
\end{align*}
Using the fact that expected estimates $\hat{\ell}$ are upper bounded by true losses:
\begin{align*}
\E\left[\sum_{t=1}^T \sum_{j=1}^{K} q^t(j)\hat{\ell}^t(j)\right] \leq~&\min_{\pi^*\in \Pi}\E\left[\sum_{t=1}^T  \dotp{\pi^*(x^t)}{\hat{\ell}^t}\right] + 4\epsilon K d \cdot \sum_{t=1}^T \E\left[\|\hat{\ell}^t\|_{1}^2\right]+\frac{10}{\epsilon}\sqrt{dm}\log(N)
\end{align*}
Combining the two upper bounds, we get that the expected regret of the semi-bandit algorithm is upper bounded by:
\begin{align*}
\regret \leq~& 4\epsilon K d \cdot \sum_{t=1}^T \E\left[\|\hat{\ell}^t\|_{1}^2\right]+\frac{10}{\epsilon}\sqrt{dm}\log(N)+\frac{K}{eL}\sum_{t=1}^T\E\left[\|\ell^t\|_{*}\right]
\end{align*}
Now we bound each of the terms in the first summation, conditional on any history of play:
\begin{align*}
\E\left[\|\hat{\ell}^t\|_{1}^2~|~\F^{t-1}\right]\leq m\E\left[\|\hat{\ell}^t\|_{2}^2\right] =m\sum_{j\in [K]}\E\left[\hat{\ell}^t(j)^2\right] =m\sum_{j\in [K]}  q^t(j)\ell^t(j)^2\E\left[J^t(j)^2~|~\F^{t-1},j\in \pi^t(x^t)\right]
\end{align*}
Each $J^t(j)$ conditional on $\F^{t-1}$ and $j\in \pi^t(x^t)$ is distributed according to a geometric distribution with mean $q^t(j)$ truncated at $L$. Hence, it is stochastically dominated by a geometric distribution with mean $q^t(j)$. By known properties, if $X$ is a geometrically distributed random variable with mean $q$, then $\E[X^2]=Var(X)+(\E[X])^2 = \frac{1-q}{q^2}+\frac{1}{q^2} =\frac{2-q}{q^2}\leq \frac{2}{q^2}$. Moreover, trivially $\E[X^2]\leq L^2$, since $X$ is truncated at $L$. Thus we have:
\begin{align*}
\E\left[\|\hat{\ell}^t\|_{1}^2~|~\F^{t-1}\right]\leq
m\sum_{j\in[K]} q^t(j) \ell^t(j)^2\min\left\{\frac{2}{q^t(j)^2},L^2\right\}\leq
m\|\ell^t\|_*^2\sum_{j\in[K]} \min\left\{\frac{2}{q^t(j)},q^t(j) L^2\right\}
\end{align*}
Now observe that: $\min\left\{\frac{2}{q^t(j)},q^t(j) L^2\right\}\leq 2L$, since either $\frac{1}{q^t(j)}\leq L$ or otherwise, $q^t(j)L^2 \leq \frac{1}{L}L\leq L$. Thus we get:
\begin{align*}
\E\left[\|\hat{\ell}^t\|_{1}^2~|~\F^{t-1}\right]\leq 2LKm\|\ell^t\|_*^2
\end{align*}
Combining all the above we get the theorem.

\section{Omitted Proofs from Section~\ref{sec:path-length}}
\subsection{Proof of Theorem~\ref{thm:oftpl-bound}}
Similar to the analysis in Section \ref{sec:cont-lin}, the proof of the Theorem is broken apart in two main Lemmas. The first lemma is an analogue of Theorem \ref{thm:general-regret} for algorithms that use a predictor. This lemma can be phrased in the general online learning setting analyzed in Section \ref{sec:oracles}. The second Lemma is an anaolgue of our multiplicative stability Lemma \ref{lem:multiplicative_stability-lin}.

Let
\begin{equation}
\rho^t = M(\{z\}\cup \param^{1:t})
\end{equation}
denote the policy that would have been played at time-step $t$ if the predictor was equal to the actual loss vector that occured at time-step $t$. Moreover, for succinctness we will denote with $a^t=\pi^t(x^t)$ and with $b^t=\rho^t(x^t)$. 

\begin{lemma}[Follow vs Be the Leader with Predictors] The regret of a player under the optimistic FTPL and with respect to any $\pi^*\in \Pi$ is upper bounded by:
\begin{equation}
\regret \leq \sum_{t=1}^T \E\left[\Delta Q^t(a^t)-\Delta Q^t(b^t)\right] +
\E[\error]
\end{equation}
where  $\Delta Q^t(a) = f^t(a)-Q^t(a)$ and $\error =   \max_{\pi\in \Pi}\sum_{x\in X}\dotp{\pi(x)}{\ell_x}-\min_{\pi\in \Pi}\sum_{x\in X}\dotp{\pi(x)}{\ell_x}$.
\end{lemma}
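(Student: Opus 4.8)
The plan is to mirror the be-the-leader proof of Theorem~\ref{thm:general-regret}, adding one ingredient to absorb the predictor. As there, after reducing to an oblivious adversary and to the variant that draws a single perturbation $\{z\}$ once (the reduction used in the proof of Theorem~\ref{thm:general-regret} applies verbatim), it suffices to prove the bound pointwise in $\{z\}$ and then take $\E_{\{z\}}[\cdot]$. Fix $\{z\}$, write $a^t=\pi^t(x^t)$ and $b^t=\rho^t(x^t)$, and note $\ell(\pi^t,\param^t)=f^t(a^t)=Q^t(a^t)+\Delta Q^t(a^t)$ and $\ell(\rho^t,\param^t)=f^t(b^t)=Q^t(b^t)+\Delta Q^t(b^t)$.

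The one new idea is to apply Lemma~\ref{lem:pbtl} not to the true outcome sequence but to an \emph{interleaved} outcome sequence $\sigma^1,\dots,\sigma^{2T}$ (each a legitimate element of $\Param$, since a difference of cost functions is again a cost function): take $\sigma^{2t-1}=(x^t,\pred^t)$ and $\sigma^{2t}=(x^t,f^t-\pred^t)$. The cumulative loss of any policy over $\sigma^1,\dots,\sigma^{2(t-1)}$ is exactly $\Ell(\cdot,\param^{1:t-1})$, because the two outcomes at each block telescope the predictor away; hence the be-the-leader policy $M(\{z\}\cup\sigma^{1:2t-1})$ is precisely $\pi^t=M(\{z\}\cup\param^{1:t-1}\cup(x^t,\pred^t))$, while $M(\{z\}\cup\sigma^{1:2t})$ is precisely $\rho^t=M(\{z\}\cup\param^{1:t})$. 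Feeding these identifications into Lemma~\ref{lem:pbtl} (its induction uses only additivity of losses and optimality of the leader, so it applies unchanged), and using $\ell(\pi^t,\sigma^{2t-1})=Q^t(a^t)$, $\ell(\rho^t,\sigma^{2t})=\Delta Q^t(b^t)$, and $\sum_t\bigl(\ell(\pi^*,\sigma^{2t-1})+\ell(\pi^*,\sigma^{2t})\bigr)=\sum_t\ell(\pi^*,\param^t)$, one obtains, for every $\pi^*$,
\begin{equation*}
\sum_{t=1}^T Q^t(a^t)+\sum_{t=1}^T\Delta Q^t(b^t)\ \le\ \sum_{t=1}^T\ell(\pi^*,\param^t)+\error .
\end{equation*}

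It remains to combine this with the trivial identity $\regret=\sum_t\ell(\pi^t,\param^t)-\sum_t\ell(\pi^*,\param^t)=\sum_t Q^t(a^t)+\sum_t\Delta Q^t(a^t)-\sum_t\ell(\pi^*,\param^t)$: substituting the displayed bound for $\sum_t Q^t(a^t)$ cancels the $\sum_t\ell(\pi^*,\param^t)$ terms and leaves $\regret\le\sum_t\bigl(\Delta Q^t(a^t)-\Delta Q^t(b^t)\bigr)+\error$ pointwise in $\{z\}$; taking $\E_{\{z\}}[\cdot]$ yields the claim. The main obstacle is exactly spotting this interleaved sequence and verifying that its odd- and even-indexed partial-sum leaders coincide with $\pi^t$ and $\rho^t$ respectively; once that is set up, the rest is bookkeeping, and the only thing to double-check is that $(x^t,f^t-\pred^t)\in\Param$ so that Lemma~\ref{lem:pbtl} is invoked with no modification to its statement or proof.
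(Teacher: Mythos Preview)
Your proof is correct and follows essentially the same approach as the paper: the paper also constructs the augmented sequence $(x^1,\pred^1),(x^1,f^1-\pred^1),(x^2,\pred^2),\ldots$, observes that the leaders after the odd and even steps are exactly $a^t$ and $b^t$, applies Lemma~\ref{lem:pbtl} to this sequence, and then rearranges. Your bookkeeping is organized slightly differently (you isolate $\sum_t Q^t(a^t)$ and substitute, whereas the paper defines $\BTPL_Q^t=Q^t(a^t)+\Delta Q^t(b^t)$ and computes $\FTPL^t-\BTPL_Q^t$), but the argument is the same.
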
 
\begin{proof}
Consider the augmented sequence $(x^1,\pred^1),(x^1,f^1-\pred^1),(x^2,\pred^2),(x^2,f^2-\pred^2),\ldots$, where each observation $(x^t,f^t)$ is replaced by two observations $(x^t,\pred^t)$ followed by $(x^t,f^t-\pred^t)$. Observe that by linearity of the objective, the two observations cancell out each other at the end, to give the same effect as a single observation of $(x^t,f^t)$. Moreover, the leader after observing $(x^t,\pred^t)$ is equal to $a^t$, whilst after observing $(x^t,f^t-\pred^t)$ is equal to $b^t$. Thus by applying Lemma \ref{lem:pbtl} to this augmented sequence we get:
\begin{align*}
\sum_{t=1}^T \left(\pred^t(a^t)+ f^t(b^t)-\pred^t(b^t)\right)
\leq~& \sum_{t=1}^T \left(\pred^t(\pi^*(x^t))+f^t(\pi^*(x^t))-\pred^t(\pi^*(x^t))\right) + \error\\
=~& \sum_{t=1}^T f^t(\pi^*(x^t)) + \error
\end{align*}
Let $\BTPL_{\pred}^t =  \pred^t(a^t)+ f^t(b^t)-\pred^t(b^t)$ and $\FTPL^t = f^t(a^t)$. Then, observe that:
\begin{equation}
\FTPL^t - \BTPL_Q^t = f^t(a^t)-\pred^t(a^t)-(f^t(b^t)-\pred^t(b^t)) = \Delta Q^t(a^t) - \Delta Q^t(b^t)
\end{equation} 
Combining the two properties we get that for any policy $\pi^*$:
\begin{align*}
\sum_{t=1}^T \FTPL^t \leq~&  \sum_{t=1}^T\left(\Delta Q^t(a^t) - \Delta Q^t(b^t)\right)+ \sum_{t=1}^T \BTPL_Q^t \\
\leq~&  \sum_{t=1}^T\left(\Delta Q^t(a^t) - \Delta Q^t(b^t)\right) + \sum_{t=1}^T f^t(\pi^*(x^t)) + \error
\end{align*}
Re-arranging and taking expectation concludes the proof.
\end{proof}

\begin{lemma}[Stability with Predictors] In the transductive setting:
\begin{equation}
 \E\left[\Delta Q^t(a^t)-\Delta Q^t(b^t)\right]\leq 4\epsilon K \|f^t-\pred^t\|_{*}^2
\end{equation}
In the small separator setting:
\begin{equation}
 \E\left[\Delta Q^t(a^t)-\Delta Q^t(b^t)\right]\leq 4\epsilon Kd \|f^t-\pred^t\|_{*}^2
\end{equation}
\end{lemma}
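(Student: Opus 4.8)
The plan is to reduce this directly to the two stability analyses already carried out for $\contlin$ without predictors: Lemma~\ref{lem:non-l-stability} in the transductive case and Lemma~\ref{lem:small-set-stability} in the small separator case. Write $g^t = f^t - \pred^t$, so that by definition $\Delta Q^t(a) = g^t(a)$ for every $a \in \A$ and $\|g^t\|_* = \|f^t-\pred^t\|_*$. The key observation is that $\pi^t$ is the optimal policy for the sequence $\{z\}\cup\param^{1:t-1}\cup(x^t,\pred^t)$ whereas $\rho^t$ is the optimal policy for $\{z\}\cup\param^{1:t-1}\cup(x^t,f^t)$; since cumulative loss is additive and $f^t = \pred^t + g^t$, the latter sequence is exactly the former with one extra observation $(x^t,g^t)$ appended. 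Hence the pair $(\pi^t,\rho^t)$ stands in precisely the relationship that $(\pi^t,\pi^{t+1})$ did in the earlier lemmas, with $g^t$ playing the role of the incremental loss $f^t$ there.

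Given this, I would re-run the earlier argument with $f^t$ replaced by $g^t$. First, since $g^t(a^t) - g^t(b^t) \le 2\|g^t\|_*$ always and vanishes when $a^t = b^t$, it suffices to bound $\Pr[\pi^t(x^t)\neq\rho^t(x^t)]$ (transductive), respectively $\Pr[\pi^t\neq\rho^t]$ (small separator, using that $X$ is a separator), by $2K\epsilon\|g^t\|_*$, respectively $2Kd\epsilon\|g^t\|_*$. For this I would decompose the disagreement event coordinate by coordinate, condition on all of the perturbation $\{z\}$ except the single Laplace variable $\ell_{x^t}(j)$ (resp.\ $\ell_x(j)$ for each $x \in X$), identify the threshold $\nu$ such that $j$ being in the relevant policy's action forces $\ell_{x^t}(j) \le \nu$ while $\ell_{x^t}(j) < \nu - 2\|g^t\|_*$ forces the other policy to contain $j$ as well (because appending $(x^t,g^t)$ moves the offline objective of every policy by at most $\|g^t\|_*$), and bound the probability of the window $[\nu - 2\|g^t\|_*,\nu]$ by $\epsilon\|g^t\|_*$ using that the Laplace$(\epsilon)$ density is at most $\epsilon/2$. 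Summing over $j \in [K]$ (and over $x \in X$ in the separator case) and multiplying by $2\|g^t\|_*$ yields $4\epsilon K\|g^t\|_*^2$ and $4\epsilon Kd\|g^t\|_*^2$ respectively.

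The only point where this differs from a verbatim transcription of Lemmas~\ref{lem:non-l-stability} and~\ref{lem:small-set-stability}, and hence the thing I would be most careful about, is that the ``history potential'' $\Phi$ against which $\nu$ is defined must now additionally carry the term $\pred^t(\pi(x^t))$ contributed by the predictor observation at round $t$. This is legitimate precisely because $\pred^t$ is side information: it depends only on the history through round $t$ and not on the fresh perturbation $\{z\}$ drawn at that round, so under the conditioning used in the argument it is a constant and can be absorbed into $\Phi$ without affecting any of the Laplace computations. Once this is noted, the rest is routine.
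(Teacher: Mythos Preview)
Your proposal is correct and follows essentially the same route as the paper's proof: both bound $\E[\Delta Q^t(a^t)-\Delta Q^t(b^t)]$ by $2\|\Delta Q^t\|_*\Pr[a^t\neq b^t]$, decompose coordinate-wise, condition on all Laplace noise except the single coordinate $\ell_{x^t}(j)$ (resp.\ $\ell_x(j)$), and bound the resulting window probability by $\epsilon\|\Delta Q^t\|_*$, with the potential $\Phi$ now carrying the extra $\pred^t(\pi(x^t))$ term. Your framing of $(\pi^t,\rho^t)$ as the before/after pair for the incremental loss $g^t=f^t-\pred^t$ is exactly the identification the paper uses, and you correctly note the one new wrinkle---that $\pred^t$ is independent of the fresh perturbation and hence can be absorbed into $\Phi$---which the paper handles implicitly.
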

\begin{proof}
We prove the first part of the Lemma. The second follows along identical arguments. 
By the definition of $\|f^t-\pred^t\|_{*} = \|\Delta Q^t\|_{*} = \max_{a\in \A} |\Delta Q^t(a)|$, we have:
\begin{align*}
\E_{\{z\}}\left[\Delta Q^t(a^t)-\Delta Q^t(b^t)\right] \le~& 2\|\Delta Q^t\|_{*} \Pr\left[a^t\neq b^t\right]
\end{align*}
Now observe that:
\begin{align*}
\Pr[a^t\neq b^t] \leq~& \sum_{j\in K} \left(\Pr[j\in a^t, j\notin b^t]+\Pr[j\notin a^t, j\in b^t]\right)
\end{align*}

We bound  the probability $\Pr[j\in a^t, j\notin b^t]$. We condition on all random variables of $\{z\}$ except for the random variable $\ell_{x^t}(j)$, i.e. the random loss placed at coordinate $j$ on the sample associated with context $x^t$. Denote the event corresponding to an assignment of all these other random variables as ${\cal E}_{-x^tj}$. Let $\ell_{x^tj}$ denote a loss vector which is $\ell_{x^t}(j)$ on the $j$-th coordinate and zero otherwise. Also let:
\begin{equation}
\Phi(\pi) = \sum_{\tau=1}^{t-1} f^\tau(\pi(x^\tau)) +Q^t(\pi(x^t))+ \sum_{x\in X-\{x^t\}} \dotp{\pi(x)}{\ell_x}+\dotp{\pi(x^t)}{\ell_{x^t}-\ell_{x^tj}}
\end{equation}
Let $\pi^* = \argmin_{\pi\in \Pi:j\in \pi(x^t)} \Phi(\pi)$ and $\tilde{\pi} = \min_{\pi\in \Pi: j\notin \pi(x^t)}  \Phi(\pi)$. The event that $\{j\in a^t\}$ happens only if:
\begin{equation}
\Phi(\pi^*)+\ell_{x^t}(j) \leq \Phi(\tilde{\pi})
\end{equation}
Let  and $\nu = \Phi(\tilde{\pi})-\Phi(\pi^*)$. Thus $j\in a^t$ only if:
\begin{equation}
\ell_{x^t}(j) \leq \nu
\end{equation}
Now if:
\begin{equation}
\ell_{x^t}(j) < \nu-2\|\Delta Q^t\|_{*}
\end{equation}
then it is easy to see that $\{j\in b^t\}$, since an extra loss of $f^t(a)-Q^t(a)\leq \|\Delta Q^t\|_{*}$ cannot push $j$ out of the optimal solution. More elaborately, for any other policy $\pi\in \Pi$, such that $j\notin \pi(x^t)$, the loss of $\pi^*$ including time-step $t$ is bounded as:
\begin{align*}
\Phi(\pi^*)+ \ell_{x^t}(j)+f^t(\pi^*(x^t))-Q^t(\pi^*(x^t))<~& \Phi(\pi)-2\|\Delta Q^t\|_{*}+f^t(\pi^*(x^t))-Q^t(\pi^*(x^t))\\
<~& \Phi(\pi)-\|\Delta Q\|_{*}\\
<~& \Phi(\pi) +f^t(\pi(x^t))-Q^t(\pi(x^t))
\end{align*}
Thus any policy $\pi$, such that $j\notin \pi(x^t)$ is suboptimal after seeing the loss at time-step $t$. Thus
\begin{align*}
\Pr[j\in a^t, j\notin b^t~|~{\cal E}_{-x^tj}]\leq \Pr[\ell_{x^t}(j) \in [\nu-2\|\Delta Q^t\|_{*},\nu]~|~{\cal E}_{-x^tj}]
\end{align*}
Since all other random variables are independent of $\ell_{x^t}(j)$ and $\ell_{x^t}(j)$ is a Laplacian with 
parameter $\epsilon$:
\begin{align*}
\Pr[\ell_{x^t}(j) \in [\nu-2\|\Delta Q^t\|_{*},\nu]~|~{\cal E}_{-x^tj}]=~&\Pr[\ell_{x^t}(j) \in [\nu-2\|\Delta Q^t\|_{*},\nu]]\\
 =~&\frac{\epsilon}{2}\int_{\nu-2\|\Delta Q^t\|_{*}}^{\nu}e^{-\epsilon|z|}dz\leq \frac{\epsilon}{2}\int_{\nu-2\|\Delta Q^t\|_{*}}^{\nu}dz \leq \epsilon\cdot \|\Delta Q^t\|_{*}
\end{align*}
Similarly it follows that that: $\Pr[j\notin \pi^t(x^t) \text{ and } j\in \pi^{t+1}(x^t)]\leq\epsilon\cdot \|\Delta Q^t\|_{*}$. 
To sum we get that:
\begin{align*}
\E_{\{z\}}\left[\Delta Q^t(a^t)-\Delta Q^t(b^t)\right]\le~& 2\|\Delta Q^t\|_{*}\Pr\left[\pi^t(x^t) \neq \pi^{t+1}(x^t)\right]\leq 4 \epsilon K \|\Delta Q^t\|_{*}^2
\end{align*}
\end{proof}

The expected error term is identical to the expected error that we upper bounded in Lemma \ref{lem:laplace-error-bound-lin}, hence the same bound carries over. Combining the above Lemmas with this observation, yields Theorem \ref{thm:oftpl-bound}.

\section{Infinite Policy Classes}
\label{sec:infinite}
In this section we focus on the contextual experts problem but consider infinite policy classes. 
Recall that in this setting, in each round $t$, the adversary picks a context $x^t \in {\cal X}$ and a loss function $\ell^t \in \R_{\geq 0}^K$, the learner, upon seeing the context $x^t$, chooses an action $a^t \in [K]$, and then suffers loss $\\ell^t(a^t)$.
We showed that as a simple consequence of Theorem~\ref{thm:main-complete}, that when competing with a set of policies $\Pi \subset ({\cal X}\rightarrow [K])$ with $|\Pi| = N$ and against an adaptive adversary, $\contlin$ has regret at most $O(d^{1/4}\sqrt{T\log(N)})$ in the transductive setting and regret at most $O(d^{3/4}\sqrt{KT\log(N)})$ in the non-transductive setting with small separator.

Here we consider the situation where the policy class $\Pi$ is infinite in size, but has small Natarajan dimension, which generalizes VC-dimension to multiclass problems. 
Specifically, we prove two results in this section: First we show that in the transductive case, $\contlin$ can achieve low regret relative to a policy class with bounded Natarajan dimension.
Then we show that in the non-transductive case, it is hard in an information-theoretic sense to achieve sublinear regret relative to a policy class with constant Natarajan dimension. 
Together, these results show that finite Natarajan or VC dimension is sufficient for sublinear regret in the transductive setting, but it is \emph{insufficient} for sublinear regret in the fully online setting. 

Before proceeding with the two results, we must introduce the notion of Natarajan dimension, which requires some notation.
For a class of functions $\mathcal{F}$ from $\mathcal{X} \rightarrow [K]$ and for a sequence $X = (x_1, \ldots, x_n) \in \mathcal{X}^n$, define $\mathcal{F}_X = \{ (f(x_1), \ldots, f(x_n)) \in [K]^n : f \in \mathcal{F} \}$ be the restriction of the functions to the points. 
Let $\Psi$ be a family of mappings from $[K] \rightarrow \{0, 1, \star\}$.
Let $\bar{\psi} = (\psi_1, \ldots, \psi_n) \in \Psi^n$ be a fixed sequence of such mappings and for a sequence $(s_1, \ldots, s_n) \in [K]^N$ define $\bar{\psi}(s) = (\psi_1(s_1), \ldots, \psi_1(s_n)) \in \{0,1,\star\}^n$. 
We say a sequence $X \in \mathcal{X}^n$ is $\Psi$-shattered by $\mathcal{F}$ if there exists $\bar{\psi} \in \Psi^n$ such that:
\begin{align*}
\{0,1\}^n \subseteq \{\bar{\psi}(s) : s \in \mathcal{F}_X\}
\end{align*}
The $\Psi$-dimension of a function class $\mathcal{F}$ is the largest $n$ such that there exist a sequence $X \in \mathcal{X}^n$ that is $\Psi$-shattered by $\mathcal{F}$. 
Notice that if $K=2$ and $\Psi$ contains only the identity map, then the $\Psi$-dimension is exactly the VC dimension. 

The \textbf{Natarajan dimension} is the $\Psi$ dimension for the class $\Psi_N = \{\psi_{N,i,j}, i,j \in [K], j \ne i\}$ where $\psi_{N,i,j}(a) = 1$ if $a=i$, $\psi_{N,i,j}(a) = 0$ if $a=j$ and $\psi_{N,i,j}(a) = \star$ otherwise.
Notice that Natarajan dimension is a strict generalization of VC-dimension as $\Psi_N$ contains only the identity map if $K=2$. 
Thus our result also applies to VC-classes in the two-action case.
The main property we will use about function classes with bounded Natarajan Dimension is the following analog of the Sauer-Shelah Lemma:
\begin{lemma}[Sauer-Shelah for Natarajan Dimension~\cite{haussler1995generalization,bendavid1995characterizations}]
\label{lem:natarajan_bound}
Suppose that $\mathcal{F}$ has $\Psi_N$ dimension at most $\nu$. 
Then for any set $X \in \mathcal{X}^n$, we have:
\begin{align*}
|\mathcal{F}_X| \le \left(\frac{ne(K+1)^2}{2\nu}\right)^\nu
\end{align*}
\end{lemma}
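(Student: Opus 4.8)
The plan is to prove this by induction on $n$, peeling off one coordinate at a time and reducing to a counting problem, exactly as in the classical proof of Sauer--Shelah but carrying an extra factor to account for the $K$ symbols. Write $X = (x_1,\dots,x_n)$ and $X' = (x_1,\dots,x_{n-1})$, and group the vectors of $\mathcal{F}_X \subseteq [K]^n$ according to their restriction to $X'$: for $v' \in \mathcal{F}_{X'}$ let $c(v')$ be the number of distinct symbols that occur in the last coordinate among the vectors of $\mathcal{F}_X$ restricting to $v'$. Then $|\mathcal{F}_X| = \sum_{v'} c(v')$ while $|\mathcal{F}_{X'}| = \sum_{v'} 1$, so
\begin{equation*}
|\mathcal{F}_X| \;=\; |\mathcal{F}_{X'}| + \sum_{v' \in \mathcal{F}_{X'}} \bigl(c(v') - 1\bigr) \;\le\; |\mathcal{F}_{X'}| + \sum_{v' \in \mathcal{F}_{X'}} \binom{c(v')}{2},
\end{equation*}
using $c - 1 \le \binom{c}{2}$ for every integer $c \ge 1$. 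Swapping the order of summation in the last term gives $\sum_{v'} \binom{c(v')}{2} = \sum_{\{i,j\}} \bigl|\mathcal{F}^{ij}\bigr|$, where the outer sum ranges over the $\binom{K}{2}$ unordered pairs $\{i,j\}$ and $\mathcal{F}^{ij}$ is the set of those $v' \in \mathcal{F}_{X'}$ that extend to a vector of $\mathcal{F}_X$ having symbol $i$ in the last coordinate and also extend to one having symbol $j$ there.

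Next I would establish the two structural facts that drive the induction. First, trivially $\mathcal{F}_{X'}$, viewed as a function class restricted to $X'$, has $\Psi_N$-dimension at most $\nu$. Second, and this is the key point, $\mathcal{F}^{ij}$ (restricted to $X'$) has $\Psi_N$-dimension at most $\nu - 1$: if a subsequence $A$ of $(x_1,\dots,x_{n-1})$ were $\Psi_N$-shattered by $\mathcal{F}^{ij}$ with some pair-labeling, then $A$ together with $x_n$ would be $\Psi_N$-shattered by $\mathcal{F}$ — use the same labeling on $A$ and the pair $\{i,j\}$ on $x_n$; for any target pattern $(\sigma,b)$ with $\sigma$ on $A$ and $b$ selecting $i$ or $j$, pick a $v' \in \mathcal{F}^{ij}$ realizing $\sigma$ on $A$ and then use whichever of its two guaranteed extensions (the one with $i$, resp.\ the one with $j$, at $x_n$) realizes the bit $b$. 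Hence $|A| + 1 \le \nu$, i.e.\ the $\Psi_N$-dimension of $\mathcal{F}^{ij}$ is at most $\nu-1$.

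Combining these facts with the displayed inequality: if $\Phi(n,\nu)$ denotes the maximum of $|\mathcal{F}_X|$ over all point sequences of length $n$ and all classes of $\Psi_N$-dimension $\le \nu$, then $\Phi(n,\nu) \le \Phi(n-1,\nu) + \binom{K}{2}\,\Phi(n-1,\nu-1)$, with base cases $\Phi(0,\nu) = 1$ and $\Phi(n,0) = 1$ (a class of $\Psi_N$-dimension $0$ has at most one vector on any point sequence, since two distinct vectors differ in some coordinate and therefore $\Psi_N$-shatter the corresponding one-element subsequence). A direct check using Pascal's rule shows that $\sum_{i=0}^{\nu}\binom{n}{i}\binom{K}{2}^{i}$ satisfies this recursion with equality and matches the base cases, so $|\mathcal{F}_X| \le \sum_{i=0}^{\nu}\binom{n}{i}\binom{K}{2}^{i}$. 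Finally, using $\binom{n}{i}\binom{K}{2}^{i} \le \binom{n\binom{K}{2}}{i}$ together with the standard estimate $\sum_{i=0}^{\nu}\binom{N}{i} \le (eN/\nu)^{\nu}$, this is at most $\bigl(e n \binom{K}{2}/\nu\bigr)^{\nu} \le \bigl(n e (K+1)^2/(2\nu)\bigr)^{\nu}$, since $\binom{K}{2} = K(K-1)/2 \le (K+1)^2/2$.

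The main obstacle is the second structural fact, and in particular being careful that a $v' \in \mathcal{F}^{ij}$ realizing a pattern $\sigma$ on $A$ may extend to symbol $i$ at $x_n$ through one full vector of $\mathcal{F}$ and to symbol $j$ through a different one; this is harmless because $\Psi_N$-shattering only requires, pattern by pattern, the existence of \emph{some} realizing vector, but the argument must be phrased so as to use this existential form rather than a single uniform witness. The only other place needing care is the step $c-1 \le \binom{c}{2}$, which lets each "extra symbol'' at $x_n$ be charged to a pair $\{i,j\}$, so that the inductive hypothesis for dimension $\nu-1$ can be applied once per pair. Since the lemma is classical, an equally legitimate route — the one reflected in the statement's citation — is simply to invoke \cite{haussler1995generalization,bendavid1995characterizations}.
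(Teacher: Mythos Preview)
Your proof is correct and complete. The paper itself does not prove this lemma at all --- it is stated as a known result with citations to \cite{haussler1995generalization,bendavid1995characterizations} and used as a black box in the proof of Corollary~\ref{cor:natarajan}. Your argument is in fact the standard one from that literature: the ``peel off one coordinate'' recursion with the pair-indexed subclasses $\mathcal{F}^{ij}$ of dropped dimension, leading to the closed form $\sum_{i\le\nu}\binom{n}{i}\binom{K}{2}^i$. You correctly handle the one delicate point, namely that the witness for a pattern $\sigma$ on $A$ and the two extensions to $i,j$ at $x_n$ may come from distinct vectors of $\mathcal{F}_X$, which is fine because $\Psi_N$-shattering is existential pattern by pattern. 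The numerical steps $\binom{n}{i}\binom{K}{2}^i \le \binom{n\binom{K}{2}}{i}$ and $\binom{K}{2}\le (K+1)^2/2$ are both valid, and the base case $\Phi(n,0)=1$ is justified as you say. So there is nothing to compare against in the paper; your write-up simply supplies the proof the paper omits by citation, and you already note that invoking the references is an equally legitimate route.
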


Our positive result for transductive learning with a Natarajan class is the following regret bound for $\contlin$,
\begin{corollary}
\label{cor:natarajan}
Consider running $\contlin(X,\epsilon)$ in the transductive contextual experts setting with a policy class $\Pi$ with Natarajan dimension at most $\nu$. Then the algorithm achieves regret against an adaptive and adversarially chosen sequence of contexts and loss functions,
\begin{align*}
\epsilon\sum_{t=1}^T\E[\dotp{\pi^t(x^t)}{\ell^t}^2] + \frac{10}{\epsilon}\sqrt{d\nu\log(K)\log\left(\frac{de(K+1)^2}{2\nu}\right)}.
\end{align*}
When $\epsilon$ is set optimally and losses are in $[0,1]^K$, this is $O((d\nu\log(K)\log(dK/\nu))^{1/4}\sqrt{T})$.
\end{corollary}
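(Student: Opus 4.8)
The plan is to reduce the infinite policy class $\Pi$ to a finite one by exploiting transductivity, and then run the analysis behind Theorem~\ref{thm:main-complete}(2), but with a sharpened error term.

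First I would observe that in the transductive contextual experts setting both the policies $\pi^t$ played by $\contlin(X,\epsilon)$ and the cumulative loss of any fixed policy depend on a policy only through its restriction $\pi|_X$ to the known context set $X$, where $|X|=d$ (the oracle is only ever called on outcome sequences whose contexts lie in $X$, and all incurred and comparator losses are evaluated at contexts in $X$). Hence I would fix a set $\Pi'\subseteq\Pi$ of representatives, one per equivalence class of the restriction $\Pi_X$; replacing $\Pi$ by $\Pi'$ changes neither the distribution of losses incurred by the algorithm nor the value of the best fixed comparator, so it suffices to bound regret against $\Pi'$. By the Sauer--Shelah lemma for Natarajan dimension (Lemma~\ref{lem:natarajan_bound}) applied with $n=d$, $N':=|\Pi'|=|\Pi_X|\le\left(\frac{de(K+1)^2}{2\nu}\right)^{\nu}$, and since $\Pi_X\subseteq[K]^d$ we also record the crude bound $N'\le K^d$. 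As this is the contextual experts setting, $m=1$ and $f^t(a)=\dotp{a}{\ell^t}$ is linear and non-negative, so the proof of Theorem~\ref{thm:main-complete}(2) goes through verbatim with $\Pi'$ in place of $\Pi$: combining the multiplicative stability bound (Lemma~\ref{lem:multiplicative_stability-lin}) with Theorem~\ref{thm:general-regret} gives $\regret\le\epsilon\sum_{t=1}^T\E[\dotp{\pi^t(x^t)}{\ell^t}^2]+\error$, where the stability term already equals the first term of the claim.

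It then remains to bound $\error$ by $\frac{10}{\epsilon}\sqrt{d\,\nu\log(K)\,\log(de(K+1)^2/(2\nu))}$ — which is stronger than what one obtains by naively plugging $N=N'$ into Lemma~\ref{lem:laplace-error-bound-lin}, since that would leave $\log N'$ (of size up to $d\log K$) appearing linearly rather than under a root. I would rerun the moment-generating-function argument of Lemma~\ref{lem:laplace-error-bound-lin}, but keep the free parameter $\lambda=u\epsilon$ unoptimized: its symmetrization step, together with $\frac{1}{1-u^2}\le e^{2u^2}$ and $m=1$, yields $\error=2\,\E[\max_{\pi\in\Pi'}\sum_{x\in X}\dotp{\pi(x)}{\ell_x}]\le\frac{2}{\epsilon}\left(\frac{\log N'}{u}+2du\right)$ for every $u\in(0,\tfrac12]$. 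I would then split two cases. If $\log N'\le d/2$, taking $u=\sqrt{\log N'/(2d)}\le\tfrac12$ gives $\error=O\!\left(\frac{1}{\epsilon}\sqrt{d\log N'}\right)\le O\!\left(\frac{1}{\epsilon}\sqrt{d\nu\log(de(K+1)^2/(2\nu))}\right)$. If instead $\log N'>d/2$, taking $u=\tfrac12$ gives $\error=O\!\left(\frac{1}{\epsilon}\log N'\right)$, and here I would split the factor $\log N'=\sqrt{\log N'}\cdot\sqrt{\log N'}$ and use the two size bounds on $N'$ separately, $\log N'\le\sqrt{\nu\log(de(K+1)^2/(2\nu))}\cdot\sqrt{d\log K}$. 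Either way the target error bound follows after absorbing small absolute constants, and adding the stability term gives the stated regret. Finally, the last sentence of the corollary follows by bounding $\dotp{\pi^t(x^t)}{\ell^t}^2\le1$ (losses in $[0,1]^K$, $m=1$) and optimizing over $\epsilon$.

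The main obstacle I anticipate is precisely this error-term refinement, and in particular the large-$N'$ regime: one must resist invoking the off-the-shelf bound and instead redo the $\lambda$-optimization, then recombine the Sauer--Shelah estimate $\log N'\le\nu\log(de(K+1)^2/(2\nu))$ with the trivial estimate $\log N'\le d\log K$ to recover the $\sqrt{\cdot}$ form (with the extra $\log K$ factor). The reduction to $\Pi'$ and the reuse of the known stability bound are routine.
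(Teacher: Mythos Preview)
Your reduction to the finite representative class $\Pi'=\Pi_X$ via transductivity, followed by the stability/error decomposition from Theorem~\ref{thm:general-regret} with Lemma~\ref{lem:multiplicative_stability-lin} for the stability term, is exactly the paper's argument: the paper's entire proof is ``apply Theorem~\ref{thm:main-complete}(2) with $N$ replaced by $|\Pi_X|$, bounding $|\Pi_X|$ via Lemma~\ref{lem:natarajan_bound}.''

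Where you go further than the paper is the error term. A literal reading of Theorem~\ref{thm:main-complete}(2) (equivalently Lemma~\ref{lem:laplace-error-bound-lin} with $m=1$) gives the error term $\tfrac{10}{\epsilon}\sqrt{d}\,\log N'$, which after the Sauer--Shelah bound is $\tfrac{10}{\epsilon}\sqrt{d}\,\nu\log\!\big(\tfrac{de(K+1)^2}{2\nu}\big)$ --- with $\nu$ and the logarithm \emph{outside} the square root, not inside as in the corollary's statement. The paper's proof does not address this discrepancy. Your re-optimization of $\lambda=u\epsilon$ in the MGF bound, together with the case split on $\log N'$ versus $d/2$ and the combination $\log N'\le\min\{\nu\log(de(K+1)^2/(2\nu)),\,d\log K\}\le\sqrt{d\nu\log K\cdot\log(de(K+1)^2/(2\nu))}$, is precisely what is needed to recover the stated form of the error term, and the constants do work out to at most~$10$ for $K\ge 2$. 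So your proposal is correct and in fact closes a gap the paper's proof leaves open; the extra $\sqrt{\log K}$ factor is the (harmless) price for pushing $\nu$ under the root.
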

\begin{proof}
The result is a consequence of the second clause of Theorem~\ref{thm:main-complete}, using the additional fact that any sequence of contexts $X = (x_1,\ldots,x_d)$ induce a finite policy class $\Pi_X \subseteq [K]^d$.
The fact that $\Pi$ has Natarajan dimension at most $\nu$ means that $|\Pi_X| \le \left(\frac{de(K+1)^2}{2\nu}\right)^\nu$ by Lemma~\ref{lem:natarajan_bound}.
Therefore, once the $d$ contexts are fixed, as they are in the transductive setting, we are back in the finite policy case and can apply Theorem~\ref{thm:main-complete} with $N$ replaced by $|\Pi_X|$. 
\end{proof}

Thus we see that $\contlin$ has sublinear regret relative to policy classes with bounded Natarajan dimension, even against adaptive adversaries.
The second result in this section shows that this result cannot be lifted to the non-transductive setting. 
Specifically, we prove the following theorem in the section, which shows that no algorithm, including inefficient ones, can achieve sublinear regret against a VC class in the non-transductive setting.
\begin{theorem}\label{thm:hardness}
Consider an online binary classification problem in one dimension with $\mathcal{F} \subset [0,1] \rightarrow \{0,1\}$ denoting the set of all threshold functions.
Then there is no learning algorithm that can guarantee $o(T)$ expected regret against an adaptive adversary.
In particular, there exists a policy class of VC dimension one such that no learning algorithm can achieve sublinear regret against an adaptive adversary in the contextual experts problem. 
\end{theorem}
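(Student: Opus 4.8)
The plan is to prove the formally stronger statement that a suitable \emph{oblivious} adversary --- a fixed sequence of contexts and loss functions, drawn from an appropriate distribution --- already forces expected regret $\Omega(T)$; since every oblivious adversary is a special case of an adaptive one, this gives the theorem. We take the threshold class $\mathcal{F}=\{f_\theta:\theta\in[0,1]\}$ with $f_\theta(x)=\mathbf{1}[x\ge\theta]$, whose VC dimension is $1$ (it shatters any single point but no pair, since the labelling $(1,0)$ on $x<x'$ would require $\theta\le x$ and $\theta>x'$). The underlying phenomenon is the classical one that thresholds on a continuum are not online learnable: a binary-search adversary keeps the learner at chance level on every round while some fixed threshold is correct throughout.

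Concretely, I would sample bits $b_1,\dots,b_T$ i.i.d.\ uniform on $\{0,1\}$ and let them generate a nested family of open intervals $I_0=(0,1)\supset I_1\supset\cdots\supset I_T$, where $I_t$ is the left half of $I_{t-1}$ if $b_t=0$ and the right half if $b_t=1$, so $|I_t|=2^{-t}$. The context presented at round $t$ is the midpoint $m_t$ of $I_{t-1}$, and the loss vector $f^t\in\{0,1\}^2$ is set so that $\ell^t(a)=\mathbf{1}[a\ne 1-b_t]$, i.e.\ action $1-b_t$ is the ``correct'' one. Since $m_t$ and $f^1,\dots,f^{t-1}$ are deterministic functions of $b_{1:t-1}$, once $b$ is fixed this is a genuine oblivious instance. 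Two facts do all the work. First, for \emph{every} realization of $b$, the midpoint $\theta^\star$ of the nonempty open interval $I_T$ lies in every $I_t$, and by the halving rule $f_{\theta^\star}(m_t)=1-b_t$ for all $t$; hence the best fixed policy has cumulative loss exactly $0$ on this instance. Second, at round $t$ the learner's observations are a function of $b_{1:t-1}$ and the learner's own randomness $\omega$ (independent of $b$) only, so the played action $a^t$ is independent of $b_t$; as $b_t$ is a fresh fair coin, $\E[\ell^t(a^t)\mid b_{1:t-1},\omega]=\tfrac12$, and summing gives expected cumulative loss $T/2$ for any learner.

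Combining the two facts, the expectation over $b$ of the learner's regret on instance $b$ equals $\E_{b,\omega}\big[\sum_t\ell^t(a^t)\big]-\E_b\big[\min_\theta\sum_t\ell^t(f_\theta(x^t))\big]=T/2-0=T/2$, so there is a single sequence $b^\star$ on which the learner's expected regret is at least $T/2=\Omega(T)$; this $b^\star$ is an oblivious, hence adaptive, adversary, and the bound holds for every (possibly randomized) learner, proving the first claim. The ``in particular'' clause is then immediate, since $\mathcal{F}$ has VC dimension $1$.

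There is no serious obstacle here --- this is essentially the infinite-Littlestone-dimension lower bound specialized to the agnostic/regret setting --- but two points need care. (i) The quantifier order in the definition of $\regret$: the supremum over comparators sits \emph{outside} the expectation, and no fixed $\theta$ is good in expectation over the random instance; this is exactly why one must route through a random \emph{oblivious} adversary and exploit that the comparator's loss is identically $0$ for every realization, so that averaging extracts one worst-case sequence. (ii) A consistent boundary convention --- $f_\theta(x)=\mathbf{1}[x\ge\theta]$ together with \emph{open} intervals $I_t$ --- so that the chosen $\theta^\star$ lands strictly on the ``correct'' side of each midpoint $m_t$.
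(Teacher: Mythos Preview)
Your proof is correct and takes a genuinely different route from the paper. The paper constructs an \emph{adaptive} adversary directly: at each round it computes the learner's conditional probability $p_t(x_t)$ of predicting label $1$ on a point $x_t$ chosen from the current ``gap'' interval, and then assigns whichever label makes that probability at most $1/2$; the invariant that positively and negatively labelled points stay separated guarantees a perfect threshold exists, and the learner earns at most $1/2$ expected reward per round. You instead randomize over oblivious instances via a binary-search tree of midpoints with i.i.d.\ fair-coin labels, observe that every realization admits a zero-loss threshold while any learner is at chance on the fresh bit $b_t$, and then average to extract a fixed bad sequence for each learner.

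Your argument actually yields a slightly stronger conclusion: the $T/2$ lower bound already holds against \emph{oblivious} adversaries, whereas the paper's proof genuinely uses adaptivity (the adversary inspects $p_t$). The paper's approach, on the other hand, avoids the randomize-then-extract step and the delicate handling of the $\sup$-outside-expectation quantifier you flagged in point~(i); because its adversary is deterministic once the learner is fixed, the comparator is a single threshold with zero loss and the regret bound is immediate. Both are standard instantiations of the infinite-Littlestone-dimension obstruction, and either would be acceptable here.
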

\begin{proof}
We define an adaptive adversary and argue that it ensures at least $1/2$ expected regret per round. 
While the adversary does not have access to the random coins of the learner, it can compute the probability that the learner would label any point as $\{0,1\}$. 
At round $t$, let $p_t(x)$ denote the probability that the learner would label a point $x \in [0,1]$ as $1$, and note that this quantity is conditioned on the entire history of interaction. 
At each round $t$, the adversary will have played a set of points $X_t^+$ with positive label and $X_t^-$ with negative label and she will maintain the invariant that $\min_{x \in X_t^+} x > \max_{x \in X_t^-} x$ for all $t$.
At every time $t$, the adversary will play context $x_t \in (\max_{x \in X_t^-}x, \min_{x \in X_t^+} x)$.
The adversary, knowing the learning algorithm, will compute $p_t(x_t)$ and assign label $y_t = 1$ if $p_t(x) < 1/2$ and $0$ otherwise. 
The adversary will then update the sets $X_{t+1}^+ \gets X_t^+ \cup \{x_t\}$ if $y_t = 1$ and $X_{t+1}^+ \gets X_t^+$ otherwise.
$X_{t+1}^-$ is updated analogously.

Clearly this sequence of contexts maintains the appropriate invariant for the adversary, namely there is always an interval between the positive and negative examples in which he can pick a context.
This implies that on the sequence, there is a threshold $f^\star \in \mathcal{F}$ that perfectly classifies the points, so its cumulative reward is $T$. 
Moreover, by the choice of label selected by the adversary, the expected reward of the learner at round $t$ is at most $1/2$, which means the cumulative expected reward of the learner is at most $T/2$. 
Thus the regret of the learner is at least $T/2$. 
\end{proof}


%









%

\end{appendix}

\end{document}